\documentclass{article}

\usepackage{microtype}
\usepackage{subcaption}

\usepackage{url}
\usepackage{wrapfig}
\usepackage{booktabs}    
\usepackage{multirow}    
\usepackage{makecell}    
\usepackage{graphicx}
\usepackage[table]{xcolor}

\definecolor{c6}{HTML}{95baa6}
\definecolor{c7}{HTML}{4b8dbc}

\definecolor{c5}{HTML}{C00000}
\newcommand{\red}[1]{\textcolor{c5}{#1}}

\definecolor{mycolor}{HTML}{a7caea} 
 
\definecolor{mygreen}{HTML}{95BAA6}
\definecolor{myblue}{HTML}{4b8dbc}
\definecolor{myyellow}{HTML}{E2CD89}

\definecolor{linkcolor}{HTML}{ED1C24}

\definecolor{mydarkgreen}{rgb}{0.02,0.6,0.02}

\definecolor{iccvblue}{RGB}{0,102,204} 
\usepackage[pagebackref,colorlinks,citecolor=iccvblue,linkcolor=linkcolor]{hyperref}

\usepackage{amsmath}
\usepackage{amssymb}
\usepackage{mathtools}
\usepackage{amsthm}
\usepackage{makecell}
\usepackage{enumitem}
\usepackage{pifont}
\usepackage{marvosym}

\theoremstyle{plain}
\newtheorem{theorem}{Theorem}[section]
\newtheorem{proposition}[theorem]{Proposition}
\newtheorem{lemma}[theorem]{Lemma}

\theoremstyle{definition}

\theoremstyle{remark}

\usepackage{enumitem}
\let\oldding\ding
\renewcommand{\ding}[2][1]{\scalebox{#1}{\oldding{#2}}}

\usepackage[accepted]{icml2026}

\usepackage[capitalize,noabbrev]{cleveref}

\usepackage[textsize=tiny]{todonotes}

\icmltitlerunning{WorldCache: Accelerating World Models for Free via Heterogeneous Token Caching}

\begin{document}

\twocolumn[
  \icmltitle{WorldCache: Accelerating World Models for Free via Heterogeneous Token Caching}



  \icmlsetsymbol{equal}{*}
  \begin{icmlauthorlist}
    \icmlauthor{Weilun Feng}{equal,ict,ucas}
    \icmlauthor{Guoxin Fan}{equal,ict,ucas}
    \icmlauthor{Haotong Qin}{equal,eth}
    \icmlauthor{Mingqiang Wu}{ict,ucas}
    \icmlauthor{Yuqi Li}{ny}
    \icmlauthor{Xiangqi Li}{ict,ucas}
    \icmlauthor{Zhulin An\textsuperscript{\Letter}}{ict}
    \icmlauthor{Libo Huang}{ict}
    \icmlauthor{Dingrui Wang}{tum}
    \icmlauthor{Longlong Liao}{fu}
    \icmlauthor{Michele Magno}{eth}
    \icmlauthor{Yongjun Xu}{ict,xiamen}
    \icmlauthor{Chuanguang Yang\textsuperscript{\Letter}}{ict}
  \end{icmlauthorlist}

  \icmlaffiliation{ict}{State Key Laboratory of AI Safety, Institute of Computing Technology, Chinese Academy of Sciences}
\icmlaffiliation{ucas}{University of Chinese Academy of Sciences}
\icmlaffiliation{eth}{ETH Z\"{u}rich}
\icmlaffiliation{ny}{City College of New York, City University of New York, USA}
\icmlaffiliation{tum}{Technical University of Munich}
\icmlaffiliation{fu}{Fuzhou University}
\icmlaffiliation{xiamen}{Xiamen Institute of Data Intelligence, Xiamen, China}

    \icmlcorrespondingauthor{\textsuperscript{\Letter}Chuanguang Yang}{yangchuanguang@ict.ac.cn}
  \icmlcorrespondingauthor{\textsuperscript{\Letter}Zhulin An}{anzhulin@ict.ac.cn}

  \icmlkeywords{Machine Learning, ICML}
{%
\renewcommand\twocolumn[1][]{#1}%
\begin{center}
    \centering
    \captionsetup{type=figure}
    \includegraphics[width=1.0\textwidth]{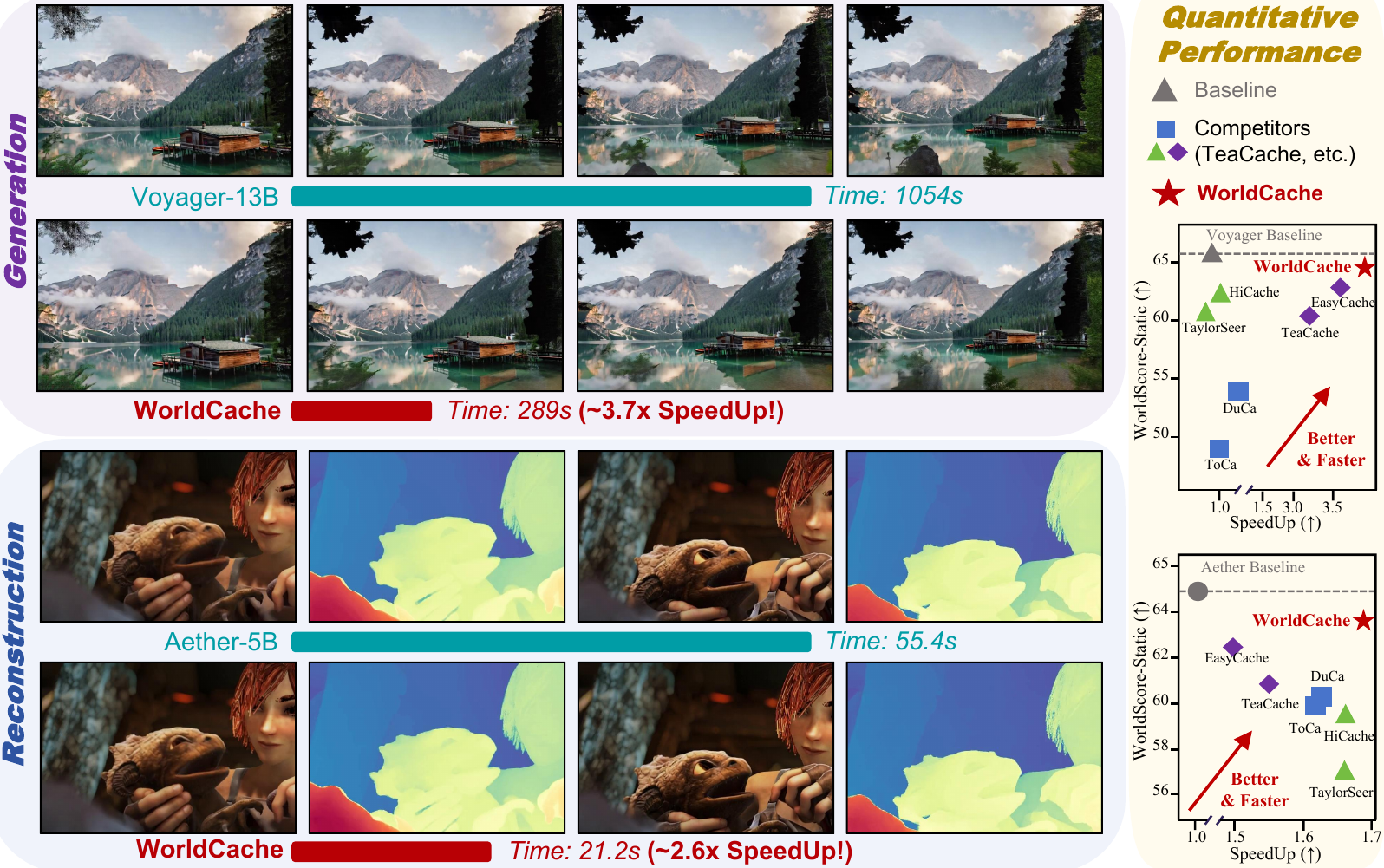}
    \vspace{-0.6cm}
    \captionof{figure}{\textbf{WorldCache} greatly accelerates two diffusion world models: HunyuanVoyager~\citep{huang2025voyager} and Aether~\citep{zhu2025aether} with up to \textbf{3.7$\times$} speedup, while preserving high-fidelity details.}
    \label{mt-t0}
\end{center}%
}
]



\printAffiliationsAndNotice{\icmlEqualContribution}

\begin{abstract}

Diffusion-based world models have shown strong potential for unified world simulation, but the iterative denoising remains too costly for interactive use and long-horizon rollouts. While feature caching can accelerate inference without training, we find that policies designed for single-modal diffusion transfer poorly to world models due to two world-model-specific obstacles: \emph{token heterogeneity} from multi-modal coupling and spatial variation, and \emph{non-uniform temporal dynamics} where a small set of hard tokens drives error growth, making uniform skipping either unstable or overly conservative.
We propose \textbf{WorldCache}, a caching framework tailored to diffusion world models. We introduce \textit{Curvature-guided Heterogeneous Token Prediction}, which uses a physics-grounded curvature score to estimate token predictability and applies a Hermite-guided damped predictor for chaotic tokens with abrupt direction changes. We also design \textit{Chaotic-prioritized Adaptive Skipping}, which accumulates a curvature-normalized, dimensionless drift signal and recomputes only when bottleneck tokens begin to drift. Experiments on diffusion world models show that WorldCache delivers up to \textbf{3.7$\times$} end-to-end speedups while maintaining \textbf{98\%} rollout quality, demonstrating the vast advantages and practicality of WorldCache in resource-constrained scenarios.
Our code is released in \href{https://github.com/FofGofx/WorldCache}{https://github.com/FofGofx/WorldCache}.

\end{abstract}

\section{Introduction}

\begin{figure*}[t!]
    \centering
    \includegraphics[width=0.98\textwidth]{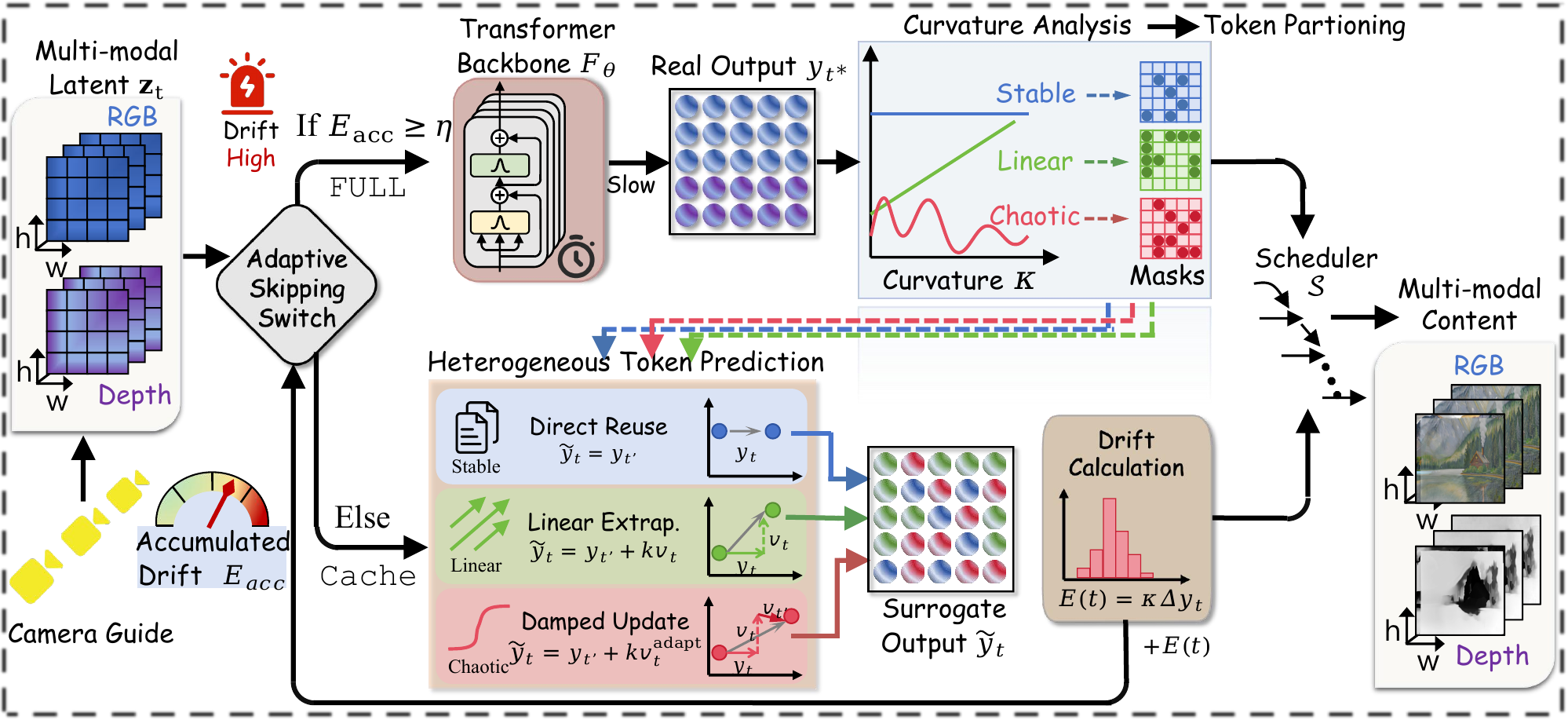}
    \caption{\textbf{Overview of the proposed WorldCache framework.} The pipeline alternates between \texttt{FULL} backbone evaluation and \texttt{CACHE} approximation. \textbf{(Top)} In each full computation step, tokens are partitioned into Stable, Linear, and Chaotic groups based on their curvature $\kappa$. \textbf{(Bottom)} During caching steps, heterogeneous predictors (Reuse, Linear Extrapolation, or Damped Update) are applied accordingly. \textbf{(Left)} The Chaotic-prioritized Adaptive Skipping (CAS) mechanism accumulates a curvature-normalized drift score $E_{acc}$ specifically from chaotic tokens, triggering a full computation only when critical drift is detected.}
    \label{fig:overview}
\end{figure*}

World models~\citep{bar2025navigation,liu2024sora,bruce2024genie,agarwal2025cosmos,hafner2025mastering,ZhulinAn2025TheInnovationInformatics} have recently emerged as a compelling foundation for building more general-purpose intelligence.
Rather than merely generating observed contents, world models aim to capture spatiotemporal dynamics of the environment, enabling long-horizon imagination for planning, decision making, and interactive agents.
With the rapid progress of large-scale generative models~\citep{croitoru2023diffusionsurvey,naveed2025comprehensive,feng2025qvggt}, generation-driven world models~\citep{russell2025gaia,bar2025navigation,huang2025voyager,zhu2025aether,li2026comprehensive} built upon diffusion models have gained increasing attention for synthesizing immersive, coherent, and even interactive virtual environments from large-scale data.

However, modern diffusion-based world models remain costly at inference time since they require many denoising steps with repeated backbone evaluations~\citep{ho2020ddpm,lipman2022flow}.
Recently, various techniques~\citep{feng2025qvdit,xi2025svg,feng2025s2qvdit,zhang2024sageattention,feng2025mpqdm,feng2025quantsparse,feng2025mpqdmv2} have been developed to enable efficient diffusion inference.
Among which, \emph{feature caching}~\citep{selvaraju2024fora,ma2024deepcache,liu2025teacache} is particularly attractive due to its training-free nature: it reduces sampling cost by reusing or cheaply forecasting intermediate representations across timesteps.

While feature caching has achieved strong speedups in single-modal image or video diffusion, we identify that directly transferring existing policies to diffusion world models often leads to rapid error accumulation and unstable rollouts.
In particular, world-model simulation exhibits two distinctive properties that fundamentally challenge conventional caching:

\raisebox{-0.5pt}{\ding[1.1]{182\relax}} \textbf{Heterogeneous token evolution with a long-tailed difficulty profile.}
Unlike single-modal diffusion where token dynamics are relatively uniform, world models jointly evolve tokens that correspond to different physical factors (e.g., appearance vs.\ geometry) and different spatial derivation.
Consequently, the \emph{predictability} of token trajectories is highly non-uniform: most tokens evolve smoothly and are easy to reuse or extrapolate, but a small fraction exhibit sharp, non-linear changes tied to physically critical structures (e.g., motion boundaries or depth discontinuities).
This long-tailed difficulty makes uniform caching inherently inefficient: a global conservative rule wastes computation on the easy majority, whereas a global aggressive rule is bottlenecked by the hard minority and causes overall drift.

\raisebox{-0.5pt}{\ding[1.1]{183\relax}} \textbf{Non-stationary temporal regimes where a few bottleneck tokens dominate failure.}
World-model denoising is also \emph{regime-dependent}: the model may traverse long intervals where trajectories are smooth and caching is reliable, followed by short intervals where dynamics become abruptly non-linear.
Importantly, caching failure is typically triggered not by average feature change, but by the same hard-to-cache token subset becoming unpredictable in these difficult regimes.
As a result, fixed skipping schedules may miss critical updates, and global-threshold heuristics that treat all tokens equally either (i) react too late when bottleneck tokens drift, or (ii) over-trigger due to benign changes in easy tokens, yielding poor speed--quality trade-offs.

To address these challenges, we present \textbf{WorldCache}, a training-free acceleration framework tailored for diffusion world models through \emph{heterogeneous token caching}.
Our approach introduces \textit{Curvature-guided Heterogeneous Token Prediction} (CHTP), which uses a physics-grounded curvature score to estimate token-wise predictability and assigns different approximation rules: 0th-order reuse for stable tokens, 1st-order extrapolation for near-linear tokens, and a curvature-aware damped predictor for chaotic tokens.
To regulate when expensive backbone evaluations are necessary, we further propose \textit{Chaotic-prioritized Adaptive Skipping} (CAS), which constructs a \emph{dimensionless} drift indicator by combining curvature with feature deviations.
This yields a unified, scale-normalized uncertainty score whose accumulation triggers \texttt{FULL} computation precisely when the bottleneck token subset begins to drift, enabling aggressive skipping without destabilizing multi-modal rollouts.

Our contributions can be summarized as:
\begin{itemize}
\item We identify two world-model-specific challenges that hinder existing diffusion caching methods: long-tailed token predictability induced by multi-modal heterogeneity, and non-stationary temporal regimes where bottleneck tokens dominate caching failure.
\item We propose curvature-guided heterogeneous token prediction that allocates different caching rules to tokens based on trajectory nonlinearity, with a dedicated damped predictor for chaotic tokens.
\item We introduce a chaotic-prioritized adaptive skipping strategy with a curvature-induced \emph{dimensionless} drift score, enabling a unified threshold for stable caching decisions across heterogeneous token scales and timesteps.
\item Extensive experiments on diffusion world models demonstrate that WorldCache substantially reduces sampling cost while preserving multi-modal rollout quality.
\end{itemize}

\section{Related Works}

\subsection{Data-Driven World Models}
Data-driven world models~\citep{ha2018recurrent, lecun2022path} learn predictive internal representations of the environment to simulate futures for control and planning.
Classical methods build compact latent dynamics with recurrent state-space models, enabling imagination and policy optimization~\citep{hafner2019dreamer,hafner2020mastering,hafner2023mastering,hafner2025mastering}.
More recently, scaling laws in generative modeling have motivated \emph{tokenized} world models that generate high-fidelity, long-horizon rollouts, including interactive environments learned from large-scale video data~\citep{bruce2024genie,agarwal2025cosmos} and large video generators discussed as emergent ``world simulators''~\citep{liu2024sora,russell2025gaia,bar2025navigation}.
Building upon generative models, diffusion-based world models further adopt DiT~\citep{peebles2023dit} backbones to jointly model coupled modalities (e.g., RGB and geometry/depth, optionally action-conditioned) for unified world representation and downstream simulation~\citep{huang2025voyager,zhu2025aether}.
However, such unified multi-modal generation substantially amplifies inference cost, motivating acceleration techniques tailored to world-model dynamics.

\subsection{Feature Caching for Diffusion Models}
Feature caching is a training-free paradigm that accelerates diffusion sampling by exploiting temporal redundancy across denoising steps.
Existing methods can be broadly grouped into: 
(i) \emph{reuse-based caching} that skips computation by reusing intermediate representations across nearby steps, often at block/layer granularity~\citep{selvaraju2024fora,ma2024deepcache,ma2024learningtocache,kahatapitiya2025adacache,chen2025incrementcalibrated};
(ii) \emph{token-adaptive caching} that applies selective reuse to subset tokens while preserving other tokens for full computation~\citep{zou2024toca, zou2024duca,zheng2025compute};
(iii) \emph{forecasting-based caching} that predicts future features via local trajectory approximation (e.g., Taylor expansion) or trajectory integration, reducing long-interval drift~\citep{liu2025taylorseer};
and (iv) \emph{runtime-adaptive scheduling} that decides when to cache using lightweight proxies or online uncertainty signals~\citep{liu2025teacache, zhou2025easycache}.

However, most prior caching strategies are developed for \emph{single-modal} image/video diffusion and implicitly assume relatively homogeneous feature dynamics.
This assumption becomes fragile in world models, where coupled multi-modal tokens exhibit distinct physical evolution patterns.
This motivates us to introduce a token-heterogeneous caching mechanism tailored to world models.

\section{Preliminaries}
\label{sec:prelim}

\paragraph{Diffusion World Models with Transformer Backbones.}
We consider a diffusion-based world model that generates a multi-modal world state through $T$ denoising steps, following recent transformer-based diffusion world models such as Voyager~\citep{huang2025voyager} and Aether~\citep{zhu2025aether} (we use Voyager for illustration).
Let $\mathbf{z}^{\mathrm{r}}_t \in \mathbb{R}^{c \times f \times h \times w}$ denote the RGB latents for 2D video generation and $\mathbf{z}^{\mathrm{d}}_t \in \mathbb{R}^{c \times f \times h \times w}$ denote the corresponding depth latents for 3D estimation at timestep $t$. $f, h, w$ denote the frame, height, and width, respectively.
We form the multi-modal latent by spatial concatenation
\begin{equation}
\mathbf{z}_t
=\mathrm{concat}\!\left[\mathbf{z}^{\mathrm{r}}_t,\mathbf{z}^{\mathrm{d}}_t\right]
\in \mathbb{R}^{c\times f\times 2h \times w}.
\end{equation}
The transformer backbone $\mathcal{F}_\theta$ takes tokenized multi-modal inputs and predicts the denoising direction in token space:
\begin{equation}
\mathbf{y}_t = \mathcal{F}_\theta(\mathbf{z}_t,t)\in\mathbb{R}^{N\times c},
\quad
N=f\times 2h \times w.
\end{equation}
The reverse update is then performed by a scheduler $\mathcal{S}$~\citep{song2020ddim,lipman2022flow}:
\begin{equation}
\mathbf{z}_{t-1}=\mathcal{S}(\mathbf{z}_t,\mathbf{y}_t,t).
\end{equation}

\paragraph{Feature Caching for Diffusion Models.}
Feature caching~\citep{selvaraju2024fora, liu2025teacache} accelerates sampling by reusing (or cheaply predicting) model outputs across denoising steps.
A generic \emph{model-level} caching scheme replaces the expensive backbone evaluation with a cached surrogate:
\begin{equation}
\tilde{\mathbf{y}}_t
=\mathcal{C}_t(\mathbf{z}_t,t;\mathcal{H}_t),
\end{equation}
where $\mathcal{H}_t$ stores information from previous \texttt{FULL} evaluations (e.g., past outputs $\mathbf{y}_{t'}$), and $\mathcal{C}_t$ specifies how cached computation is formed (e.g., direct reuse, interpolation, or lightweight prediction).
Through $\mathcal{C}_t$, the expensive forward $\mathcal{F}_\theta(\cdot)$ is invoked only intermittently, enabling faster world-model sampling.

\section{WorldCache}
\label{sec:method}


\subsection{Curvature-guided Heterogeneous Token Prediction}
\label{subsec:curvature_prediction}

\begin{figure}[h]
    \centering
    \subfloat[][Visualization of curvature maps for RGB and Depth modalities.]{
        \includegraphics[width=0.97\linewidth]{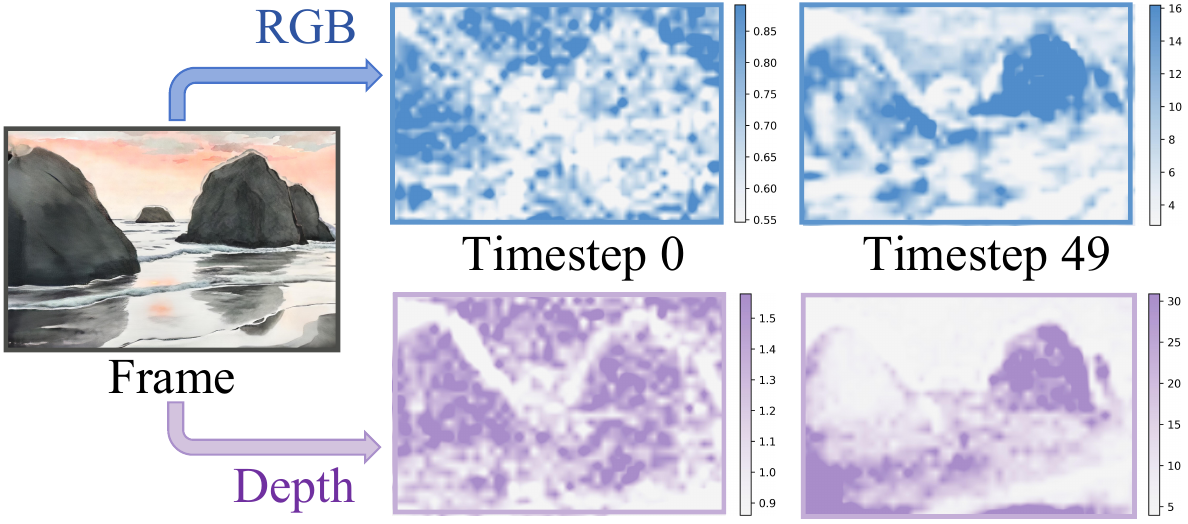}
    }\\
    \subfloat[][PCA trajectory visualization of different type tokens.]{
        \includegraphics[width=0.97\linewidth]{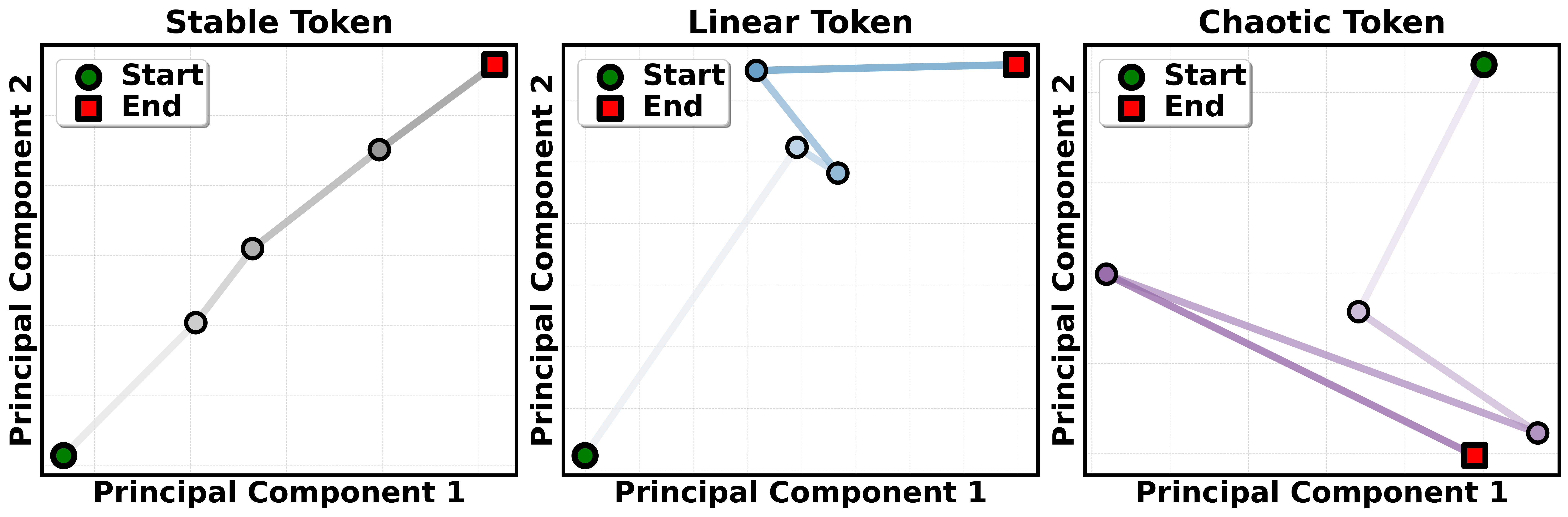}
    }
    \caption{\textbf{An illustration of token heterogeneity.} \textbf{(a) Modality and Spatial Variance:} Distinct patterns between modalities and across spatial regions. \textbf{(b) Trajectory Dynamics:} Three trajectory trends: static, predictable, and sharp, non-linear direction shifts that defy simple extrapolation. \textbf{More analysis in Appendix Sec.~\ref{sec:more_analysis_token}}}
\label{fig:obeservation1}
\end{figure}

\textbf{Observation \raisebox{-0.5pt}{\ding[1.1]{182\relax}}. \emph{World models exhibit strong token heterogeneity}.}
As shown in Fig.~\ref{fig:obeservation1}, world models mix heterogeneous modalities (e.g., RGB video vs.\ depth) and exhibit large spatial variance, yielding markedly different token trajectories across denoising steps.
As a result, a \emph{single global} caching rule (e.g., always reuse or always apply the same linear predictor) is mismatched:
conservative rules waste computation on stable tokens, while aggressive rules fail on a small subset of chaotic tokens and cause global drift.

This motivates a \emph{token-adaptive} caching strategy: estimate how difficult each token is to predict from cached history, then apply different approximations accordingly, allocating compute only to tokens that need it.

\paragraph{Curvature as a physics-grounded predictability cue.}
To quantify \textit{how predictable} each token is under caching, we measure the local nonlinearity of its temporal trajectory via a curvature-like score.
Let $\mathbf{y}_{t}\in\mathbb{R}^{N\times d}$ be the \texttt{FULL} computation output in token space at timestep $t$, and let $\mathbf{y}_{t,i}\in\mathbb{R}^{d}$ denote token $i$ (the $i$-th row) of $\mathbf{y}_{t}$.
Given the last three \texttt{FULL} outputs at timesteps $t_2>t_1>t_0$ (following the denoising order),
we define discrete velocities
\begin{equation}
\mathbf{v}_{t_0,i}=\frac{\mathbf{y}_{t_0,i}-\mathbf{y}_{t_1,i}}{t_0-t_1},
\qquad
\mathbf{v}_{t_1,i}=\frac{\mathbf{y}_{t_1,i}-\mathbf{y}_{t_2,i}}{t_1-t_2},
\end{equation}
and a discrete acceleration
\begin{equation}
\mathbf{a}_{t_0,i}=\frac{\mathbf{v}_{t_0,i}-\mathbf{v}_{t_1,i}}{t_0-t_1}.
\end{equation}
We compute the curvature score
\begin{equation}
\kappa_i
=\frac{\|\mathbf{a}_{t_0,i}\|_2}{\|\mathbf{v}_{t_0,i}\|_2^2+\varepsilon},
\label{eq:wc_curvature_compact}
\end{equation}
where $\varepsilon$ is a small constant (e.g., $\varepsilon=1e^{-8}$). This formulation is physically motivated~\citep{federer1959curvature}: $\mathbf{v}$ captures the local drift of token features along denoising time, while $\mathbf{a}$ captures how quickly this drift changes.
More importantly, the relevant skipped-step error is the deviation between the true future token and its local first-order surrogate.
For a smooth local trajectory $y_i(\tau)$, Taylor expansion gives
\begin{equation}
\scalebox{0.85}{$
\begin{aligned}
\left\| y_i(\tau+\Delta)-y_i(\tau)-\Delta y_i'(\tau) \right\|_2
&\le
\frac{\Delta^2}{2}\sup_{\xi\in[\tau,\tau+\Delta]}\|y_i''(\xi)\|_2.
\end{aligned}
$}
\end{equation}
Hence, the cache difficulty of first-order prediction is governed by second-order departure from local linearity rather than raw displacement magnitude.
In the smooth limit, curvature, namely second-order variation normalized by local speed squared, upper-bounds this local error under a local speed bound.
Eq.~\eqref{eq:wc_curvature_compact} is its finite-difference counterpart estimated from the last three \texttt{FULL} outputs.
Appendix Sec.~\ref{sec:proof_dimless} provides the formal statements and proofs.
Thus, $\kappa_i$ acts as a normalized ``turning rate.'' Small curvature indicates nearly linear evolution that is amenable to reuse or extrapolation.
By contrast, large curvature indicates fast direction changes, where naive caching becomes more drift-prone.

\paragraph{Token grouping by curvature.}
We partition tokens by curvature percentiles (computed from $\{\kappa_i\}_{i=1}^N$):
\begin{equation}
\begin{gathered}
\mathcal{I}_{\mathrm{stable}}=\{i:\kappa_i<Q_{p_s}(\kappa)\},\\
\mathcal{I}_{\mathrm{chaotic}}=\{i:\kappa_i\ge Q_{p_c}(\kappa)\},\\
\mathcal{I}_{\mathrm{linear}}=[N]\setminus(\mathcal{I}_{\mathrm{stable}}\cup\mathcal{I}_{\mathrm{chaotic}}),
\label{eq:wc_group_compact}
\end{gathered}
\end{equation}
where $Q_{p}(\kappa)$ denotes the $p$-quantile and $(p_s,p_c)$ are fixed percentiles.
The masks are refreshed whenever a new \texttt{FULL} output is obtained and three \texttt{FULL} outputs are available.

\begin{figure}[h]
    \centering
    \subfloat[][Visualization of different update direction.]{
        \includegraphics[width=0.95\linewidth]{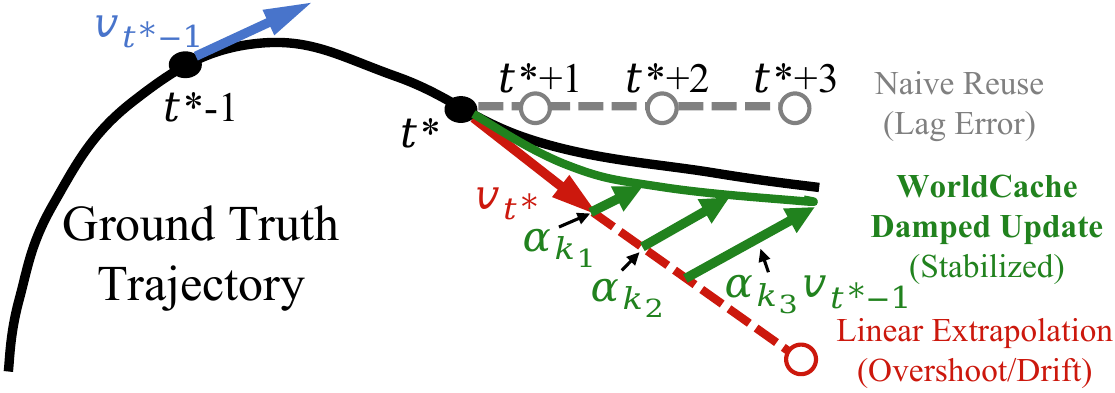}
    }\\
    \subfloat[][Quantitative analysis of cache error under different updates.]{
        \includegraphics[width=0.95\linewidth]{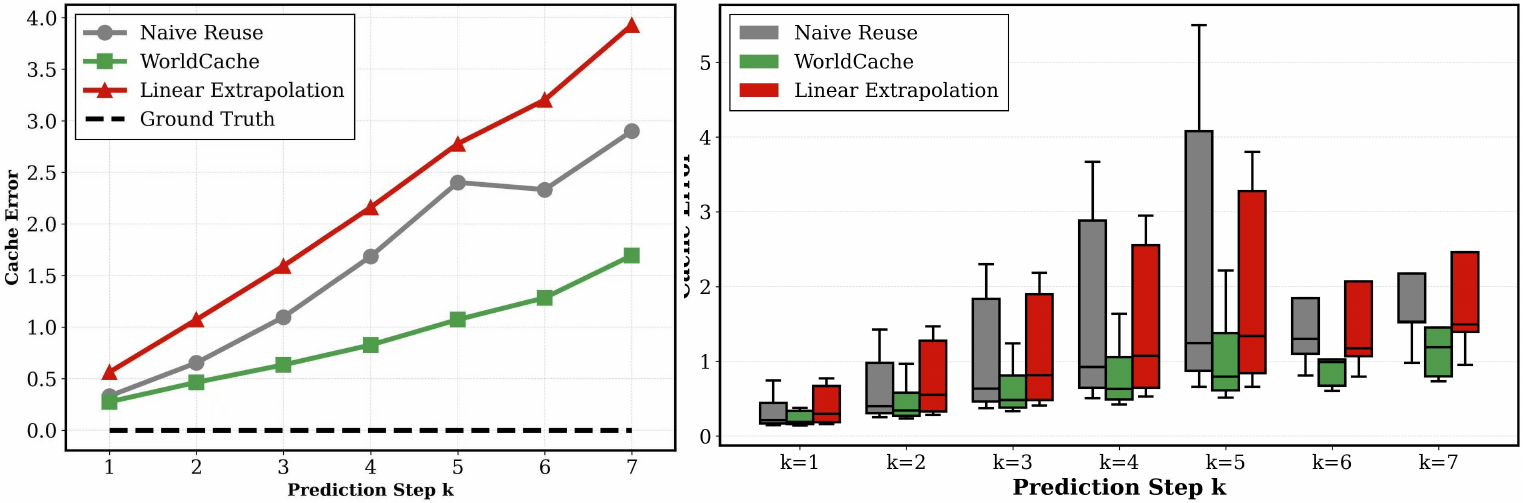}
    }
    \caption{\textbf{Mechanism and effectiveness of the Damped Update.} \textbf{(a) Trajectory Illustration:} Damped update stabilizes prediction through historical $\mathbf{v}_{t^\star-1}$. \textbf{(b) Quantitative Error Analysis:} Damped update reduces chaotic tokens cache error as the prediction window $k$ increases.}
\label{fig:damped_update}
\end{figure}

\paragraph{Heterogeneous prediction: easy tokens vs.\ chaotic tokens.}
Let $t^\star$ denote the most recent \texttt{FULL} computation timestep, and let $k$ be the number of consecutive \texttt{CACHE} steps since $t^\star$.
We denote the most recent \texttt{FULL} output as $\mathbf{y}_{t^\star}$, the corresponding velocity $\mathbf{v}_{t^\star}$, and its token as $\mathbf{y}_{t^\star,i}$.
In \texttt{CACHE}, we construct a surrogate token-space output $\tilde{\mathbf{y}}_t$ token-wise:
\begin{equation}
\tilde{\mathbf{y}}_{t,i}=
\begin{cases}
\mathbf{y}_{t^\star,i},
& i\in\mathcal{I}_{\mathrm{stable}} ,\\
\mathbf{y}_{t^\star,i} + k\cdot \mathbf{v}_{t^\star,i},
& i\in\mathcal{I}_{\mathrm{linear}},\\
\mathbf{y}_{t^\star,i} + k\cdot \mathbf{v}^{\mathrm{adapt}}_{i}(k),
& i\in\mathcal{I}_{\mathrm{chaotic}}.
\end{cases}
\label{eq:wc_token_predict}
\end{equation}
Here $\mathbf{v}_{t^\star,i}$ is the latest cached velocity (computed from the two most recent \texttt{FULL} outputs).

\textbf{Chaotic group.}
Tokens in $\mathcal{I}_{\mathrm{chaotic}}$ exhibit high curvature with abrupt direction shifts; naive 1st-order extrapolation quickly accumulates errors and causes drift.
To stabilize long cached streaks, we adopt a curvature-aware \emph{damped} update by blending two recent velocities with a cubic Hermite (smoothstep) schedule~\citep{weisstein2002hermite}:
\begin{equation}
\begin{gathered}
\mathbf{v}^{\mathrm{adapt}}_{i}(k)
=(1-\alpha_k)\mathbf{v}_{t^\star,i}+\alpha_k\mathbf{v}_{t^\star-1,i},
\\
\alpha_k = 3x_k^2-2x_k^3,
\quad
x_k=\min\!\left(\frac{k}{n_{\max}},1\right),
\label{eq:wc_chaotic_damp_compact}
\end{gathered}
\end{equation}
where $n_{\max}$ is the maximum cached streak. As shown in Fig.~\ref{fig:damped_update}, this design reduces reliance on a single-step tangent direction.
As $k$ grows, $\alpha_k$ increases and the update becomes more conservative, mitigating drift under high-curvature dynamics while retaining caching efficiency.

\subsection{Chaotic-prioritized Adaptive Skipping}
\label{sec:wc_skip}

\begin{figure}[h]
    \centering
    \includegraphics[width=0.95\linewidth]{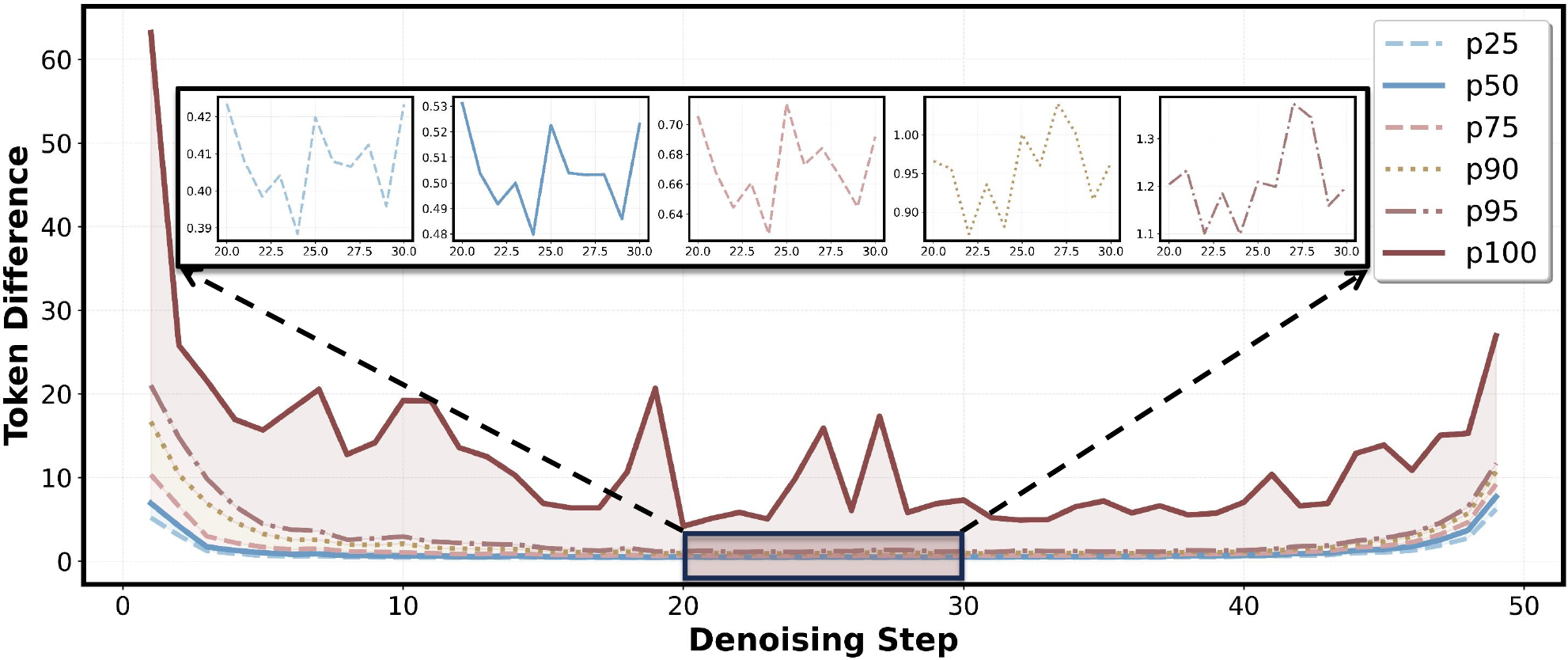}
    \caption{\textbf{An illustration of non-uniform temporal dynamics.} We plot the feature difference magnitude across denoising steps for different token percentiles ($p_{25}$ to $p_{100}$). The global drift is dominated by a small subset of "hard" tokens (top percentile, red line), while the majority remain stable. \textbf{More analysis in Appendix Sec.~\ref{sec:more_adaptive_skip}.}}
\label{fig:temporal_dynamic}
\end{figure}

\textbf{Observation \raisebox{-0.5pt}{\ding[1.1]{183\relax}}. \textit{World models exhibit non-uniform temporal dynamics with token-dependent difficulty.}} 
As shown in Fig.~\ref{fig:temporal_dynamic}, timesteps in world-model denoising are not equally challenging: trajectories can be smooth for many steps and then abruptly become highly non-linear.
Such temporal ``hardness'' is typically variant within tokens (only subsets of tokens vary much at a single timestep).
Therefore, we should monitor \emph{accumulated drift on the hardest tokens} and trigger \texttt{FULL} computation only when their uncertainty indicates imminent divergence.

\paragraph{Dimensionless normalized drift.}
A key goal of adaptive skipping is to compare \emph{accumulated} deviations across timesteps under heterogeneous token statistics.
However, raw feature differences (e.g., $\|\tilde{\mathbf{y}}_{t,i}-\tilde{\mathbf{y}}_{t+1,i}\|_2$) are scale-dependent:
their magnitudes vary with modality-specific norms and timestep-dependent distribution shifts, making a unified threshold unreliable.
We address this by building a dimensionless drift primitive using curvature.

\begin{theorem}[Curvature-induced dimensionless normalization]
\label{thm:dimless_block}
Let $\kappa_i$ be computed by Eq.~\eqref{eq:wc_curvature_compact}.
For any feature deviation $\Delta\mathbf{y}_{t,i}$ that shares the same modality/timestep scalar units as $\mathbf{y}_{t,i}$ (e.g., $\Delta\mathbf{y}_{t,i}=\tilde{\mathbf{y}}_{t,i}-\tilde{\mathbf{y}}_{t+1,i}$),
the product $\kappa_i\cdot\|\Delta\mathbf{y}_{t,i}\|_2$ is dimensionless in the sense that its leading dependence on global feature rescaling cancels:
under $\mathbf{y}\mapsto \mathbf{y}'=s\mathbf{y}$ with $s>0$,
\begin{equation}
\kappa'_i\cdot\|\Delta\mathbf{y}'_{t,i}\|_2
=\kappa_i\cdot\|\Delta\mathbf{y}_{t,i}\|_2 + o(1),
\label{eq:dimless_block}
\end{equation}
where the residual $o(1)$ only arises from dimensionless numerical terms. \textbf{Detailed proofs in Appendix Sec.~\ref{sec:proof_dimless}}
\end{theorem}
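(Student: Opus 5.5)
The plan is a direct scaling computation. We track how each ingredient of Eq.~\eqref{eq:wc_curvature_compact} transforms under $\mathbf{y}\mapsto s\mathbf{y}$, and isolate the only place where exact cancellation fails, namely the regularizer $\varepsilon$.

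The timesteps $t_2>t_1>t_0$ are not features, so they are unaffected by the rescaling. The discrete velocities are linear in $\mathbf{y}$, so $\mathbf{v}'_{t_0,i}=s\,\mathbf{v}_{t_0,i}$ and $\mathbf{v}'_{t_1,i}=s\,\mathbf{v}_{t_1,i}$. The acceleration is a linear combination of these with coefficients depending only on $t_0,t_1$, so $\mathbf{a}'_{t_0,i}=s\,\mathbf{a}_{t_0,i}$. The deviation $\Delta\mathbf{y}_{t,i}$ is assumed to carry the same units as $\mathbf{y}_{t,i}$. Concretely, this holds for differences of outputs, and also for cached surrogates from Eq.~\eqref{eq:wc_token_predict}: these are linear in the stored \texttt{FULL} outputs and velocities, and Eq.~\eqref{eq:wc_chaotic_damp_compact} has scale-free weights $\alpha_k$ depending only on $k$. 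Hence $\|\Delta\mathbf{y}'_{t,i}\|_2=s\|\Delta\mathbf{y}_{t,i}\|_2$. I would state this linearity of the cache map as a small preliminary lemma.

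Substituting gives
\[
\kappa'_i\,\|\Delta\mathbf{y}'_{t,i}\|_2=\frac{s\|\mathbf{a}_{t_0,i}\|_2\cdot s\|\Delta\mathbf{y}_{t,i}\|_2}{s^2\|\mathbf{v}_{t_0,i}\|_2^2+\varepsilon}=\frac{\|\mathbf{a}_{t_0,i}\|_2\|\Delta\mathbf{y}_{t,i}\|_2}{\|\mathbf{v}_{t_0,i}\|_2^2+\varepsilon/s^2}.
\]
Its difference from $\kappa_i\|\Delta\mathbf{y}_{t,i}\|_2$ equals
\[
\kappa_i\|\Delta\mathbf{y}_{t,i}\|_2\cdot\frac{\varepsilon(1-s^{-2})}{\|\mathbf{v}_{t_0,i}\|_2^2+\varepsilon/s^2}.
\]
This factor is controlled by the dimensionless ratio $\rho_i=\varepsilon/\|\mathbf{v}_{t_0,i}\|_2^2$, since it equals $\rho_i(1-s^{-2})/(1+\rho_i/s^2)$. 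It therefore vanishes as $\rho_i\to0$, i.e.\ as $\varepsilon\to0$ for fixed $s$ and nondegenerate velocity. This is the $o(1)$ residual, and it is purely numerical in the sense of the statement. In the limit $\varepsilon=0$ the identity is exact, so the product is homogeneous of degree $0$.

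The main obstacle is making ``$o(1)$'' precise and honest. The bound degenerates when $\|\mathbf{v}_{t_0,i}\|_2$ is comparable to $\sqrt{\varepsilon}$ (stalled tokens) or when $s$ is tiny, because then $\varepsilon/s^2$ is no longer negligible. To handle this, I would state the result with an explicit quantifier: for $s$ in a fixed compact subset of $(0,\infty)$ and tokens with $\|\mathbf{v}_{t_0,i}\|_2^2\ge c\,\varepsilon$, the residual is $O(\varepsilon)$ uniformly. I would also remark on the exact-zero-velocity case separately. As a consistency check, I would note the physical-units reading: $[\kappa]=[y]^{-1}$, since time units cancel between $\mathbf{a}$ (units $[y]/[t]^2$) and $\|\mathbf{v}\|^2$ (units $[y]^2/[t]^2$). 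This dimensional count matches the scaling computation above.
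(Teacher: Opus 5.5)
Your proof is correct and is the same direct scaling computation the paper uses. The paper writes $\kappa'_i=\frac{1}{s}\kappa_i\cdot\frac{\|\mathbf{v}_{t_0,i}\|_2^2+\varepsilon}{\|\mathbf{v}_{t_0,i}\|_2^2+\varepsilon/s^2}$, lets that ratio tend to $1$ when $\varepsilon\ll\|\mathbf{v}_{t_0,i}\|_2^2$, and multiplies by $\|\Delta\mathbf{y}'_{t,i}\|_2=s\|\Delta\mathbf{y}_{t,i}\|_2$ (taken directly from the same-units hypothesis), so your explicit residual is exactly $\kappa_i\|\Delta\mathbf{y}_{t,i}\|_2$ times that ratio minus $1$.

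Two minor slips are in your optional sharpenings, not in the argument for the theorem itself. First, under only $\|\mathbf{v}_{t_0,i}\|_2^2\ge c\,\varepsilon$ the relative residual is bounded by $|1-s^{-2}|/c$, not $O(\varepsilon)$ (take equality), so for a uniform $O(\varepsilon)$ bound you want $\|\mathbf{v}_{t_0,i}\|_2^2\ge v_0^2$ with $v_0>0$ independent of $\varepsilon$. Second, your linearity lemma for cached surrogates tacitly assumes the curvature masks are scale-invariant, which is guaranteed only when $\varepsilon=0$, since otherwise the percentile ranking of the $\kappa_i$ can shift through the same $\varepsilon/s^2$ term.
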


\begin{table*}[t!]
\caption{\textbf{Quantitative comparison on Image-to-World generation.} \textbf{Bold}: the best result. *Note: Layer-wise methods marked with * exceed single-GPU memory limits, requiring CPU offloading which incurs transmission latency.}
\label{tab:exp_world}
\centering
\small
\resizebox{0.97\linewidth}{!}{
\begin{tabular}{l|ccccccccc}
\toprule
\multirow{3}{*}{\textbf{Method}} & \multicolumn{2}{c}{\textbf{Benchmark Evaluation}} & \multicolumn{3}{c}{\textbf{Perceptual Metrics}} & \multicolumn{3}{c}{\textbf{Acceleration \& Memory}} \\
\cmidrule(lr){2-3}
\cmidrule(lr){4-6}
\cmidrule(lr){7-9}
& \makecell{WorldScore\\Static$_{\mathbf{\red{\uparrow}}}$} & \makecell{WorldScore\\Dynamic$_{\mathbf{\red{\uparrow}}}$} & PSNR$_{\mathbf{\red{\uparrow}}}$ & SSIM$_{\mathbf{\red{\uparrow}}}$ & LPIPS$_{\mathbf{\red{\downarrow}}}$ & Latency(s)$_{\mathbf{\red{\downarrow}}}$ & Speed$_{\mathbf{\red{\uparrow}}}$ & \makecell{Memory \\ Overhead(GB)$_{\mathbf{\red{\downarrow}}}$} \\
\midrule  

\multicolumn{9}{c}{\cellcolor[gray]{0.92}HunyuanVoyager-13B~\citep{huang2025voyager} ($512\times768p, \texttt{frames}=49$)} \\
\midrule

Voyager & 66.28 & 46.40 & $\infty$ & 1.000 & 0.000 & 1053.7 & 1.00$\times$ & 50.44 \\
\midrule
DuCa & 53.87 & 37.71 & 16.66 & 0.508 & 0.486 & 811.5* & 1.30$\times$ & 109.70 \\
ToCa & 47.49 & 33.24 & 15.51 & 0.409 & 0.558 & 1038.4* & 1.01$\times$ & 107.35 \\
TaylorSeer & 62.46 & 43.72 & 18.32 & 0.615 & 0.293 & 1195.2* & 0.88$\times$ & 163.79 \\
HiCache & 63.80 & 44.66 & 18.56 & 0.623 & 0.281 & 1100.1* & 0.96$\times$ & 163.79 \\
TeaCache & 60.88 & 42.61 & 16.25 & 0.565 & 0.372 & 311.5 & 3.38$\times$ & 56.52 \\
EasyCache & 64.16 & 44.91 & 21.76 & 0.737 & 0.208 & 294.5 & 3.58$\times$ & 50.98 \\
HERO & 62.37 & 43.67 & 17.71 & 0.601 & 0.315 & 1100 & 0.96$\times$ & 173.42 \\
\rowcolor{mycolor!30}\textbf{WorldCache} & \textbf{64.89} & \textbf{45.43} & \textbf{23.49} & \textbf{0.770} & \textbf{0.176} & \textbf{288.6} & \textbf{3.65$\times$} & \textbf{50.58} \\
\midrule

\multicolumn{9}{c}{\cellcolor[gray]{0.92}Aether-5B~\citep{zhu2025aether} ($480\times720p, \texttt{frames}=41$)} \\
\midrule

Aether & 64.60 & 45.22 & $\infty$ & 1.000 & 0.000 & 179.7 & 1.00$\times$ & 46.58 \\
\midrule

DuCa & 60.17 & 42.12 & 26.68 & 0.838 & 0.151 & 110.3 & 1.63$\times$ & 61.44 \\
ToCa & 60.15 & 42.11 & 26.68 & 0.839 & 0.151 & 110.8 & 1.62$\times$ & 61.78 \\
TaylorSeer & 57.11 & 39.97 & 22.92 & 0.713 & 0.324 & 108.0 & 1.66$\times$ & 77.32 \\
HiCache & 58.96 & 41.27 & 24.93 & 0.784 & 0.226 & 108.8 & 1.65$\times$ & 77.32 \\
TeaCache & 60.95 & 42.67 & 26.60 & 0.843 & 0.138 & 114.2 & 1.57$\times$ & 46.78 \\
EasyCache & 62.89 & 44.02 & 22.84 & 0.720 & 0.186 & 120.9 & 1.49$\times$ & 46.59 \\
HERO & 58.62 & 41.04 & 23.56 & 0.741 & 0.259 & 132.0 & 1.36$\times$ & 75.08 \\
\rowcolor{mycolor!30}\textbf{WorldCache} & \textbf{63.68} & \textbf{44.72} & \textbf{31.87} & \textbf{0.924} & \textbf{0.066} & \textbf{107.2} & \textbf{1.68$\times$} & \textbf{46.59} \\
 
\bottomrule
\end{tabular}}
\end{table*}

Theorem~\ref{thm:dimless_block} suggests a scale-normalized primitive for drift measurement:
weight feature deviations by curvature to remove feature magnitude and make scores comparable across tokens and timesteps.
And since curvature naturally quantifies the predictive hardness of each token, we only need to monitor the \textit{chaotic} tokens for \texttt{FULL} computation.

For the chaotic token set $\mathcal{I}_{\mathrm{chaotic}}$, we define the per-step normalized drift
\begin{equation}
e_i(t)
=\kappa_i\cdot\bigl\|\tilde{\mathbf{y}}_{t,i}-\tilde{\mathbf{y}}_{t+1,i}\bigr\|_2,
\qquad i\in\mathcal{I}_{\mathrm{chaotic}},
\label{eq:wc_dimless_drift_refine}
\end{equation}
and aggregate it into a unified uncertainty score
\begin{equation}
E(t)
=\frac{1}{|\mathcal{I}_{\mathrm{chaotic}}|}\sum_{i\in\mathcal{I}_{\mathrm{chaotic}}} e_i(t).
\label{eq:wc_dimless_Et_refine}
\end{equation}
Intuitively, $E(t)$ measures the \emph{relative} drift (normalized by intrinsic trajectory scale) on the hardest tokens, and is thus robust to modality-dependent feature norms and timestep-wise distribution shifts.

\paragraph{Accumulated uncertainty for \texttt{FULL} triggering.}
We accumulate the normalized uncertainty over consecutive cached steps:
\begin{equation}
E_{\mathrm{acc}} \leftarrow E_{\mathrm{acc}} + E(t),
\label{eq:wc_dimless_acc_refine}
\end{equation}
and trigger a \texttt{FULL} backbone evaluation when $E_{\mathrm{acc}}$ exceeds a single threshold $\eta$.
Since each $E(t)$ is scale-normalized by Theorem~\ref{thm:dimless_block}, the same $\eta$ applies across timesteps and heterogeneous token distributions.

\subsection{Overall Framework}
\label{subsec:overall_framework}

\begin{algorithm}[h!]
\caption{WorldCache Framework}
\label{alg:worldcache_compact}
\begin{algorithmic}[1]
\REQUIRE Initial latent $\mathbf{z}_T$; backbone $\mathcal{F}_\theta$; scheduler $\mathcal{S}$.
\STATE Init: history buffer $\mathcal{H}\leftarrow\emptyset$ (last 3 \texttt{FULL} outputs), masks $\mathcal{I}_{\mathrm{stable}},\mathcal{I}_{\mathrm{linear}},\mathcal{I}_{\mathrm{chaotic}}$, counter $k\leftarrow 0$, accumulator $E_{\mathrm{acc}}\leftarrow 0$, prev.\ surrogate $\tilde{\mathbf{y}}_{\mathrm{prev}}\leftarrow \mathbf{0}$.
\FOR{$t=T,T-1,\dots,0$}
    \IF{$|\mathcal{H}|<3$ \OR $E_{\mathrm{acc}}\ge \eta$}
        \STATE \textbf{FULL:} $\tilde{\mathbf{y}}_t\leftarrow \mathbf{y}_t=\mathcal{F}_\theta(\mathbf{z}_t,t)$.
        \STATE Update $\mathcal{H}$ with $(\mathbf{y}_t,t)$; if $|\mathcal{H}|=3$, compute $\kappa$ and update masks (Eq.~\eqref{eq:wc_curvature_compact}, \eqref{eq:wc_group_compact}).
        \STATE Reset $k\leftarrow 0$, $E_{\mathrm{acc}}\leftarrow 0$.
    \ELSE
        \STATE \textbf{CACHE:} $k\leftarrow k+1$; predict $\tilde{\mathbf{y}}_t$ by heterogeneous token rules (reuse / linear / damped for chaotic, Eq.~\eqref{eq:wc_chaotic_damp_compact}).
        \STATE Compute $E(t)$ by Eq.~\eqref{eq:wc_dimless_Et_refine} using $\tilde{\mathbf{y}}_t,\tilde{\mathbf{y}}_{\mathrm{prev}}$; update $E_{\mathrm{acc}}\leftarrow E_{\mathrm{acc}}+E(t)$.
    \ENDIF
    \STATE $\mathbf{z}_{t-1}\leftarrow \mathcal{S}(\mathbf{z}_t,\tilde{\mathbf{y}}_t,t)$; \ $\tilde{\mathbf{y}}_{\mathrm{prev}}\leftarrow \tilde{\mathbf{y}}_t$.
\ENDFOR
\end{algorithmic}
\end{algorithm}

\begin{table*}[t!]
\caption{Quantitative comparison on 3D Reconstruction on Aether~\citep{zhu2025aether}. \textbf{Bold}: the best result.}
\label{tab:exp_3d}
\centering
\small
\resizebox{0.97\linewidth}{!}{
\begin{tabular}{l|ccccccccc}
\toprule
\multirow{2}{*}{\textbf{Method}} & \multicolumn{3}{c}{\textbf{Video Depth Estimation}} & \multicolumn{3}{c}{\textbf{Camera Pose Estimation}} & \multicolumn{3}{c}{\textbf{Acceleration \& Memory}} \\
\cmidrule(lr){2-4}
\cmidrule(lr){5-7}
\cmidrule(lr){8-10}
& Abs Rel$_{\mathbf{\red{\downarrow}}}$ & $\delta<1.25_{\mathbf{\red{\uparrow}}}$ & $\delta<1.25^2_{\mathbf{\red{\uparrow}}}$ & ATE$_{\mathbf{\red{\downarrow}}}$ & RPE trans$_{\mathbf{\red{\downarrow}}}$ & RPE rot$_{\mathbf{\red{\downarrow}}}$ & Latency(s)$_{\mathbf{\red{\downarrow}}}$ & Speed$_{\mathbf{\red{\uparrow}}}$ & \makecell{Memory \\ Overhead(GB)$_{\mathbf{\red{\downarrow}}}$} \\
\midrule  

Aether & 0.340 & 0.502 & 0.738 & 0.177 & 0.068 & 0.780 & 55.42 & 1.00$\times$ & 50.19 \\
\midrule

DuCa & 0.341 & 0.475 & 0.694 & 0.209 & 0.069 & 0.904 & 28.15 & 1.97$\times$ & 52.70 \\
ToCa & 0.341 & 0.476 & 0.694 & 0.209 & 0.069 & 0.904 & 28.02 & 1.98$\times$ & 52.70  \\
TaylorSeer & 0.361 & 0.460 & 0.718 & 0.197 & 0.068 & 1.134 & 26.71 & 2.07$\times$ & 58.57 \\
HiCache & 0.346 & 0.472 & 0.712 & 0.204 & 0.069 & 1.004 & 26.51 & 2.09$\times$ & 58.57 \\
TeaCache & 0.341 & 0.496 & 0.724 & 0.183 & 0.068 & 0.797 & 25.85 & 2.14$\times$ & 50.20 \\
EasyCache & 0.390 & 0.479 & 0.725 & 0.183 & 0.069 & 1.061 & 27.76 & 2.00$\times$ & 50.20 \\
HERO & 0.347 & 0.490 & 0.716 & \textbf{0.181} & 0.071 & 0.861 & 27.44 & 1.96$\times$ & 61.56 \\
\rowcolor{mycolor!30}\textbf{WorldCache} & \textbf{0.341} & \textbf{0.508} & \textbf{0.741} & 0.184 & \textbf{0.068} & \textbf{0.796} & \textbf{21.20} & \textbf{2.61$\times$} & \textbf{50.20} \\
 
\bottomrule
\end{tabular}}
\end{table*}

\paragraph{Pipeline summary.}
WorldCache alternates between \texttt{FULL} and \texttt{CACHE} steps during denoising.
In \texttt{FULL}, we evaluate the backbone to refresh cached outputs, estimate token curvature, and update token groups.
In \texttt{CACHE}, we predict the surrogate output via heterogeneous token prediction (reuse / linear / damped) and update the curvature-normalized drift accumulator.
When the accumulated normalized drift indicates imminent divergence of chaotic tokens, we switch back to \texttt{FULL}.
Algorithm~\ref{alg:worldcache_compact} summarizes the procedure.

\section{Experiments}

\subsection{Experimental Settings}
\label{subsec:exp_setting}

\paragraph{Models.}
We evaluate on two state-of-the-art multi-modal diffusion world models: HunyuanVoyager-13B~\citep{huang2025voyager} and Aether-5B~\citep{zhu2025aether}.
Both take image, text, and camera trajectory as conditions, and produce coupled RGB video and depth outputs.

\paragraph{Evaluation.}
For world generation, we report two complementary types of metrics. 
\textbf{(i) WorldScore benchmark.}
We use WorldScore~\citep{duan2025worldscore}, a comprehensive benchmark that evaluates world generation from both \emph{controllability} and \emph{quality}.
\textbf{(ii) Perceptual consistency.}
We measure the discrepancy to the no-cache baseline PSNR, SSIM~\citep{wang2004image}, and LPIPS~\citep{zhang2018lpips}).
For 3D reconstruction, we follow Aether~\citep{zhu2025aether} and evaluate both depth and camera pose quality.

\paragraph{Baselines.}
We compare to representative training-free diffusion caching methods, including layer-wise caching (DuCa~\citep{zou2024duca}, ToCa~\citep{zou2024toca}, TaylorSeer~\citep{liu2025taylorseer}, HiCache~\citep{feng2025hicache}), model-wise caching (TeaCache~\citep{liu2025teacache}, EasyCache~\citep{zhou2025easycache}), and HERO~\citep{song2025hero} which combines caching with token merging.

All the experiments are conducted on a single NVIDIA-A800 GPU. \textbf{More detailed settings and descriptions are provided in Appendix Sec.~\ref{sec:detail_metrics} and Sec.~\ref{sec:detail_settings_worldcache}.}

\subsection{World Generation Results}
\label{subsec:world_gen_results}

Tab.~\ref{tab:exp_world} summarizes image-to-world generation results on HunyuanVoyager-13B~\citep{huang2025voyager} and Aether-5B~\citep{zhu2025aether}.
On \textbf{Voyager-13B}, \textsc{WorldCache} attains the best perceptual metrics (PSNR 23.49 vs. 21.76 of EasyCache) and near-lossless WorldScore (45.43 compared to baseline 46.40), while achieving \textbf{3.65$\times$} end-to-end acceleration with essentially no extra memory (50.58GB vs.\ 50.44GB baseline). Notably, layer-wise caching baselines incur substantial memory overhead ($>100$GB), which can not fit in a single GPU yet do not reliably improve throughput and often degrade fidelity, highlighting their mismatch to multi-modal world simulation.
On \textbf{Aether-5B}, \textsc{WorldCache} again yields the strongest fidelity with the best WorldScore among accelerated methods (44.72 vs. 44.02 of EasyCache) and the highest speedup (\textbf{1.68$\times$}) under near-zero memory overhead. These results support our design: token-wise heterogeneous caching preserves difficult multi-modal regions, while chaotic-prioritized skipping prevents drift under non-uniform denoising dynamics.
The additional control path has a negligible impact: curvature estimation, grouping and trigger evaluation account for only around $0.05\%$ of end-to-end latency across models (see Appendix Sec.~\ref{sec:broader_eval}). Therefore, the observed gains stem from cheaper cached prediction rather than control overhead.

\subsection{3D Reconstruction Results}
\label{subsec:recon_results}

Tab.~\ref{tab:exp_3d} also reports 3D reconstruction results on Aether for depth and pose estimation.
WorldCache preserves geometry-aware capability with near-lossless performance while providing the largest 2.61$\times$ acceleration.
For \textbf{depth}, it matches the best Abs Rel (0.341 compared to baseline 0.340) and achieves the highest $\delta$ accuracy.
For \textbf{pose}, it attains the lossless RPE trans (0.068) with the lowest rotation error among accelerated methods (0.796 compared to 0.861 of HERO).
Meanwhile, \textsc{WorldCache} reduces reconstruction latency to \textbf{21.20s} (\textbf{2.61$\times$}), outperforming all baselines.

\begin{table}[h!]
\caption{Ablation study on token grouping percentiles.}
\label{tab:ablation_group}
\centering
\scriptsize
\resizebox{0.8\linewidth}{!}{
\begin{tabular}{l|ccc}
\toprule
\textbf{$\{p_s, p_c\}$} & PSNR${\mathbf{\red{\uparrow}}}$ & SSIM${\mathbf{\red{\uparrow}}}$ & LPIPS${\mathbf{\red{\downarrow}}}$ \\
\midrule  

EasyCache & 21.76 & 0.737 & 0.208 \\
\midrule

\{0.3, 0.8\} & 23.32 & 0.766 & 0.179  \\
\rowcolor{mycolor!30} \{0.3, 0.7\} & 23.49 & \textbf{0.770} & \textbf{0.176} \\
\{0.3, 0.6\} & \textbf{23.52} & 0.768 & 0.178 \\
\{0.2, 0.8\} & 23.12 & 0.760 & 0.185 \\
\{0.2, 0.7\} & 23.12 & 0.764 & 0.182 \\
\{0.2, 0.6\} & 23.33 & 0.764 & 0.181 \\
\{0.1, 0.8\} & 22.77 & 0.758 & 0.188 \\
\{0.1, 0.7\} & 22.97 & 0.758 & 0.185 \\
\{0.1, 0.6\} & 22.95 & 0.758 & 0.187 \\

\bottomrule
\end{tabular}}
\end{table}

\begin{figure*}[t]
    \centering
    \includegraphics[width=1.0\linewidth]{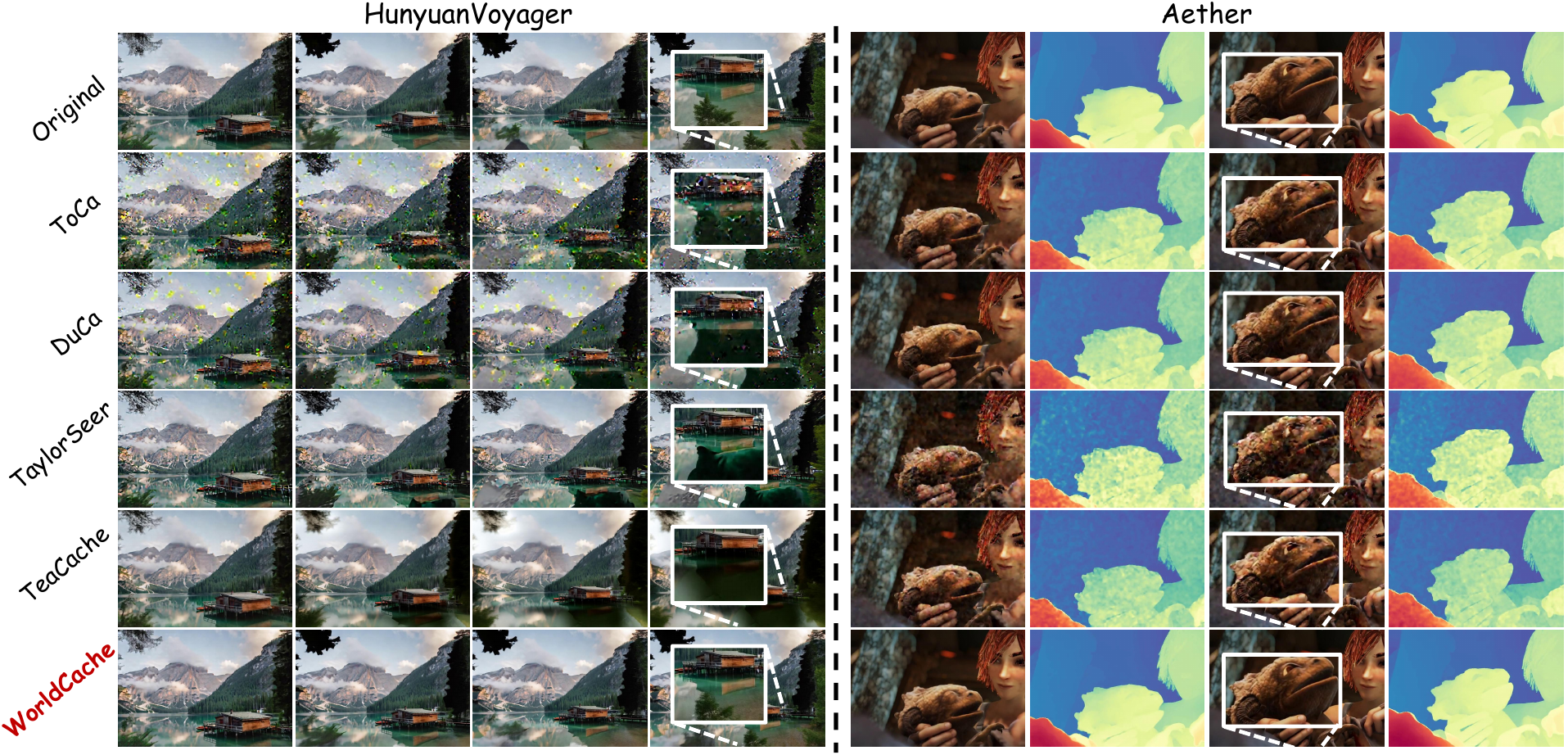}
    \caption{\textbf{Qualitative comparison of world generation tasks on Voyager~\citep{huang2025voyager} and Aether~\citep{zhu2025aether}.}}
\label{fig:qualitative_main}
\end{figure*}

\subsection{Visual Comparison}
\label{subsec:vis}

Fig.~\ref{fig:qualitative_main} compares qualitative results on both HunyuanVoyager and Aether.
Most baselines exhibit visible drift under caching, including high-frequency color noise or local blurring; these artifacts are especially evident around textured regions and boundaries (see insets).
On Aether, errors also manifest in the coupled RGB--depth outputs as boundary bleeding and inconsistent depth regions.
In contrast, \textsc{WorldCache} produces results closest to the \texttt{Original} in both appearance and geometry, preserving sharper structures and cleaner depth maps, consistent with our token-adaptive caching and chaotic-prioritized skipping design. \textbf{We provide more visual comparisons in Appendix Sec.~\ref{sec:more_visual}.}

\begin{table}[h!]
\caption{Ablation study on adaptive skipping threshold ($\eta$).}
\label{tab:ablation_threshold}
\centering
\small
\resizebox{1.0\linewidth}{!}{
\begin{tabular}{l|cccc}
\toprule
\textbf{Method} & PSNR${\mathbf{\red{\uparrow}}}$ & SSIM${\mathbf{\red{\uparrow}}}$ & LPIPS${\mathbf{\red{\downarrow}}}$ & Latency(s)${\mathbf{\red{\downarrow}}}$ \\
\midrule  

TeaCache & 26.60 & 0.843 & 0.138 & 114.2 \\
\midrule

CAS ($\eta=0.10$) & 31.66 & 0.922 & 0.070 & 109.0 \\
CAS ($\eta=0.15$) & 31.13 & 0.916 & 0.083 & 108.5 \\
\rowcolor{mycolor!30} CAS ($\eta=0.20$) & 30.63 & 0.908 & 0.099 & 107.2 \\
CAS ($\eta=0.25$) & 29.22 & 0.907 & 0.133 & 99.35 \\
CAS ($\eta=0.30$) & 28.05 & 0.894 & 0.154 & 93.86 \\
CAS ($\eta=0.35$) & 27.10 & 0.881 & 0.198 & 90.35 \\

\bottomrule
\end{tabular}}
\end{table}

\subsection{Ablation Studies}
\label{subsec:ablation}

We ablate the two core components of \textbf{WorldCache}: curvature-guided heterogeneous token prediction (CHTP) and chaotic-prioritized adaptive skipping (CAS).

\paragraph{Token prediction strategies.}
Tab.~\ref{tab:ablation_predict_main} provides further comparisons between the CHTP and the uniform and randomly grouped predictors.
While uniform reuse is inexpensive, it cannot track evolving tokens. Uniform linear extrapolation performs poorly in regions of high curvature, and uniform damped updates introduce unnecessary historical bias on easy tokens.
Mixing these operators at random is also insufficient, indicating that the improvement does not merely come from operator diversity.
Curvature-based grouping, on the other hand, identifies which tokens should be reused, extrapolated or damped, yielding the best fidelity at nearly identical latency.

\begin{table}[h!]
\caption{Ablation study on token prediction methods.}
\label{tab:ablation_predict_main}
\centering
\small
\resizebox{1.0\linewidth}{!}{
\begin{tabular}{l|cccc}
\toprule
\textbf{Method} & PSNR${\mathbf{\red{\uparrow}}}$ & SSIM${\mathbf{\red{\uparrow}}}$ & LPIPS${\mathbf{\red{\downarrow}}}$ & Latency(s)${\mathbf{\red{\downarrow}}}$ \\
\midrule  

Reuse & 22.74 & \underline{0.714} & 0.336 & \textbf{86.32} \\
Linear & 18.01 & 0.537 & 0.396 & 87.07 \\
Damped & \underline{23.76} & 0.665 & \underline{0.276} & 87.51 \\
Random Group. & 22.59 & 0.710 & 0.314 & 86.98 \\
\rowcolor{mycolor!30} \textbf{CHTP} & \textbf{25.76} & \textbf{0.791} & \textbf{0.227} & \underline{86.94} \\

\bottomrule
\end{tabular}}
\end{table}

\paragraph{Grouping percentiles.}
Tab.~\ref{tab:ablation_group} evaluates the sensitivity of the grouping thresholds $p_s$ and $p_c$ in Eq.~\ref{eq:wc_group_compact} on Voyager. 
Performance remains consistent over a wide range: all nine tested pairs surpass the most robust uniform prediction baseline, EasyCache~\citep{zhou2025easycache}, with PSNR ranging from 22.77 to 23.52, SSIM from 0.758 to 0.770, and LPIPS from 0.176 to 0.188.
This broad plateau demonstrates that the improvement is not driven by tuning, but rather primarily by the heterogeneous grouping principle itself, rather than by the selection of specific percentiles.
We use the default setting of $\{p_s,p_c\}=\{0.3,0.7\}$, which achieves the most balanced perceptual quality.

\paragraph{Skipping strategies.}
Tab.~\ref{tab:ablation_skip_main} compares CAS with several alternative triggers.
Although global difference- or norm-guided policies outperform fixed schedules, their thresholds inherit large-scale variation across timesteps and modalities, making them difficult to calibrate consistently.
Curvature-only triggering is also insufficient as hardness without actual displacement causes unnecessary recomputation.
CAS performs better because it combines both difficulty and actual displacement in a dimensionless score, while only monitoring the chaotic subset and avoiding dilution by easy tokens.

\begin{table}[h!]
\caption{Ablation study on steps skipping strategies.}
\label{tab:ablation_skip_main}
\centering
\small
\resizebox{1.0\linewidth}{!}{
\begin{tabular}{l|ccc}
\toprule
\textbf{Method} & PSNR${\mathbf{\red{\uparrow}}}$ & SSIM${\mathbf{\red{\uparrow}}}$ & LPIPS${\mathbf{\red{\downarrow}}}$ \\
\midrule  

Fixed Interval & 26.18 & 0.830 & 0.216 \\
Difference Guided & 26.79 & 0.824 & 0.207 \\
Norm Guided & 26.02 & 0.809 & 0.217 \\
Curvature Guided & 25.87 & 0.788 & 0.236 \\
\rowcolor{mycolor!30} \textbf{CAS} & \textbf{27.10} & \textbf{0.881} & \textbf{0.198} \\

\bottomrule
\end{tabular}}
\end{table}

\paragraph{Error threshold for adaptive skipping.}
Tab.~\ref{tab:ablation_threshold} evaluates the CAS triggering threshold $\eta$ in Eq.~\ref{eq:wc_dimless_acc_refine} on Aether.
Smaller $\eta$ triggers \texttt{FULL} evaluations more frequently, leading to higher fidelity but slightly higher latency, while larger $\eta$ increases speed at the cost of visible drift.
Across a wide range, CAS consistently improves over adaptive skipping baseline TeaCache~\citep{liu2025teacache}, demonstrating that our curvature-normalized drift provides a reliable skipping control and achieves both better quality and speed.
We set $\eta=0.20$ by default as a balanced operating point.

\textbf{We provide more analysis of different token prediction strategies in Appendix Sec. \ref{sec:more_analysis_token} and of adaptive skipping metrics in Appendix Sec. \ref{sec:more_adaptive_skip}.}

\section{Conclusion}

In this paper, we presented WorldCache, a training-free acceleration framework tailored for multi-modal diffusion world models. We identified that prior caching methods fail to address \emph{token heterogeneity} arising from complex multi-modal and spatial dynamics. To resolve this, we introduced \textit{Curvature-guided Heterogeneous Token Prediction}, which assigns physics-grounded strategies based on feature non-linearity, and \textit{Chaotic-prioritized Adaptive Skipping}, which shifts the update paradigm from global averaging to a bottleneck-driven mechanism. Extensive experiments demonstrate that WorldCache achieves up to 3.7$\times$ acceleration while preserving 98\% of generative quality, offering a partial solution for efficient, interactive world simulation.

\section*{Acknowledgements}
This work is supported by the National Natural Science Foundation of China under Grant Number 62406312 and 62476264, the Beijing Natural Science Foundation under Grant Number 4244098, the Science Foundation of the Chinese Academy of Sciences, and the Swiss National Science Foundation (SNSF) project 200021E\_219943 Neuromorphic Attention Models for Event Data (NAMED).

\section*{Impact Statement}
This paper presents work whose goal is to advance the field of machine learning. There are many potential societal consequences of our work, none of which we feel must be specifically highlighted here.

\bibliography{example_paper}

@article{ho2020ddpm,
  title={Denoising diffusion probabilistic models},
  author={Ho, Jonathan and Jain, Ajay and Abbeel, Pieter},
  journal={Advances in neural information processing systems},
  volume={33},
  pages={6840--6851},
  year={2020}
}

@article{croitoru2023diffusionsurvey,
  title={Diffusion models in vision: A survey},
  author={Croitoru, Florinel-Alin and Hondru, Vlad and Ionescu, Radu Tudor and Shah, Mubarak},
  journal={IEEE Transactions on Pattern Analysis and Machine Intelligence},
  volume={45},
  number={9},
  pages={10850--10869},
  year={2023},
  publisher={IEEE}
}

@article{song2020ddim,
  title={Denoising diffusion implicit models},
  author={Song, Jiaming and Meng, Chenlin and Ermon, Stefano},
  journal={arXiv preprint arXiv:2010.02502},
  year={2020}
}

@inproceedings{peebles2023dit,
  title={Scalable diffusion models with transformers},
  author={Peebles, William and Xie, Saining},
  booktitle={Proceedings of the IEEE/CVF International Conference on Computer Vision},
  pages={4195--4205},
  year={2023}
}

@article{liu2024sora,
  title={Sora: A review on background, technology, limitations, and opportunities of large vision models},
  author={Liu, Yixin and Zhang, Kai and Li, Yuan and Yan, Zhiling and Gao, Chujie and Chen, Ruoxi and Yuan, Zhengqing and Huang, Yue and Sun, Hanchi and Gao, Jianfeng and others},
  journal={arXiv preprint arXiv:2402.17177},
  year={2024}
}

@article{xi2025svg,
  title={Sparse videogen: Accelerating video diffusion transformers with spatial-temporal sparsity},
  author={Xi, Haocheng and Yang, Shuo and Zhao, Yilong and Xu, Chenfeng and Li, Muyang and Li, Xiuyu and Lin, Yujun and Cai, Han and Zhang, Jintao and Li, Dacheng and others},
  journal={arXiv preprint arXiv:2502.01776},
  year={2025}
}

@inproceedings{feng2025qvdit,
  title={Q-VDiT: Towards Accurate Quantization and Distillation of Video-Generation Diffusion Transformers},
  author={Feng, Weilun and Yang, Chuanguang and Qin, Haotong and Li, Xiangqi and Wang, Yu and An, Zhulin and Huang, Libo and Diao, Boyu and Zhao, Zixiang and Xu, Yongjun and others},
  booktitle={International Conference on Machine Learning},
  pages={16956--16976},
  year={2025},
  organization={PMLR}
}

@inproceedings{ma2024deepcache,
  title={Deepcache: Accelerating diffusion models for free},
  author={Ma, Xinyin and Fang, Gongfan and Wang, Xinchao},
  booktitle={Proceedings of the IEEE/CVF conference on computer vision and pattern recognition},
  pages={15762--15772},
  year={2024}
}

@inproceedings{feng2025mpqdm,
  title={Mpq-dm: Mixed precision quantization for extremely low bit diffusion models},
  author={Feng, Weilun and Qin, Haotong and Yang, Chuanguang and An, Zhulin and Huang, Libo and Diao, Boyu and Wang, Fei and Tao, Renshuai and Xu, Yongjun and Magno, Michele},
  booktitle={Proceedings of the AAAI Conference on Artificial Intelligence},
  volume={39},
  number={16},
  pages={16595--16603},
  year={2025}
}

@article{feng2025mpqdmv2,
  title={MPQ-DMv2: Flexible Residual Mixed Precision Quantization for Low-Bit Diffusion Models with Temporal Distillation},
  author={Feng, Weilun and Yang, Chuanguang and Qin, Haotong and Li, Yuqi and Li, Xiangqi and An, Zhulin and Huang, Libo and Diao, Boyu and Zhuang, Fuzhen and Magno, Michele and others},
  journal={arXiv preprint arXiv:2507.04290},
  year={2025}
}

@article{zhang2024sageattention,
  title={Sageattention: Accurate 8-bit attention for plug-and-play inference acceleration},
  author={Zhang, Jintao and Wei, Jia and Huang, Haofeng and Zhang, Pengle and Zhu, Jun and Chen, Jianfei},
  journal={arXiv preprint arXiv:2410.02367},
  year={2024}
}

@inproceedings{feng2025qvggt,
  title={Quantized visual geometry grounded transformer},
  author={Feng, Weilun and Qin, Haotong and Wu, Mingqiang and Yang, Chuanguang and Li, Yuqi and Li, Xiangqi and An, Zhulin and Huang, Libo and Zhang, Yulun and Magno, Michele and others},
  booktitle={International Conference on Learning Representations (ICLR)},
  year={2026}
}

@article{huang2025voyager,
  title={Voyager: Long-range and world-consistent video diffusion for explorable 3d scene generation},
  author={Huang, Tianyu and Zheng, Wangguandong and Wang, Tengfei and Liu, Yuhao and Wang, Zhenwei and Wu, Junta and Jiang, Jie and Li, Hui and Lau, Rynson and Zuo, Wangmeng and others},
  journal={ACM Transactions on Graphics (TOG)},
  volume={44},
  number={6},
  pages={1--15},
  year={2025},
  publisher={ACM New York, NY, USA}
}

@article{zou2024duca,
  title={Accelerating diffusion transformers with dual feature caching},
  author={Zou, Chang and Zhang, Evelyn and Guo, Runlin and Xu, Haohang and He, Conghui and Hu, Xuming and Zhang, Linfeng},
  journal={arXiv preprint arXiv:2412.18911},
  year={2024}
}

@article{zou2024toca,
  title={Accelerating diffusion transformers with token-wise feature caching},
  author={Zou, Chang and Liu, Xuyang and Liu, Ting and Huang, Siteng and Zhang, Linfeng},
  journal={arXiv preprint arXiv:2410.05317},
  year={2024}
}

@article{liu2025taylorseer,
  title={From reusing to forecasting: Accelerating diffusion models with taylorseers},
  author={Liu, Jiacheng and Zou, Chang and Lyu, Yuanhuiyi and Chen, Junjie and Zhang, Linfeng},
  journal={arXiv preprint arXiv:2503.06923},
  year={2025}
}

@inproceedings{liu2025teacache,
  title={Timestep Embedding Tells: It's Time to Cache for Video Diffusion Model},
  author={Liu, Feng and Zhang, Shiwei and Wang, Xiaofeng and Wei, Yujie and Qiu, Haonan and Zhao, Yuzhong and Zhang, Yingya and Ye, Qixiang and Wan, Fang},
  booktitle={Proceedings of the Computer Vision and Pattern Recognition Conference},
  pages={7353--7363},
  year={2025}
}

@article{zhou2025easycache,
  title={Less is Enough: Training-Free Video Diffusion Acceleration via Runtime-Adaptive Caching},
  author={Zhou, Xin and Liang, Dingkang and Chen, Kaijin and Feng, Tianrui and Chen, Xiwu and Lin, Hongkai and Ding, Yikang and Tan, Feiyang and Zhao, Hengshuang and Bai, Xiang},
  journal={arXiv preprint arXiv:2507.02860},
  year={2025}
}

@inproceedings{zhu2025aether,
  title={Aether: Geometric-aware unified world modeling},
  author={Zhu, Haoyi and Wang, Yifan and Zhou, Jianjun and Chang, Wenzheng and Zhou, Yang and Li, Zizun and Chen, Junyi and Shen, Chunhua and Pang, Jiangmiao and He, Tong},
  booktitle={Proceedings of the IEEE/CVF International Conference on Computer Vision},
  pages={8535--8546},
  year={2025}
}

@article{selvaraju2024fora,
  title={Fora: Fast-forward caching in diffusion transformer acceleration},
  author={Selvaraju, Pratheba and Ding, Tianyu and Chen, Tianyi and Zharkov, Ilya and Liang, Luming},
  journal={arXiv preprint arXiv:2407.01425},
  year={2024}
}

@article{ma2024learningtocache,
  title={Learning-to-cache: Accelerating diffusion transformer via layer caching},
  author={Ma, Xinyin and Fang, Gongfan and Bi Mi, Michael and Wang, Xinchao},
  journal={Advances in Neural Information Processing Systems},
  volume={37},
  pages={133282--133304},
  year={2024}
}

@inproceedings{kahatapitiya2025adacache,
  title={Adaptive caching for faster video generation with diffusion transformers},
  author={Kahatapitiya, Kumara and Liu, Haozhe and He, Sen and Liu, Ding and Jia, Menglin and Zhang, Chenyang and Ryoo, Michael S and Xie, Tian},
  booktitle={Proceedings of the IEEE/CVF International Conference on Computer Vision},
  pages={15240--15252},
  year={2025}
}

@inproceedings{chen2025incrementcalibrated,
  title={Accelerating Diffusion Transformer via Increment-Calibrated Caching with Channel-Aware Singular Value Decomposition},
  author={Chen, Zhiyuan and Li, Keyi and Jia, Yifan and Ye, Le and Ma, Yufei},
  booktitle={Proceedings of the Computer Vision and Pattern Recognition Conference},
  pages={18011--18020},
  year={2025}
}

@inproceedings{bruce2024genie,
  title={Genie: Generative interactive environments},
  author={Bruce, Jake and Dennis, Michael D and Edwards, Ashley and Parker-Holder, Jack and Shi, Yuge and Hughes, Edward and Lai, Matthew and Mavalankar, Aditi and Steigerwald, Richie and Apps, Chris and others},
  booktitle={Forty-first International Conference on Machine Learning},
  year={2024}
}

@article{hafner2019dreamer,
  title={Dream to control: Learning behaviors by latent imagination},
  author={Hafner, Danijar and Lillicrap, Timothy and Ba, Jimmy and Norouzi, Mohammad},
  journal={arXiv preprint arXiv:1912.01603},
  year={2019}
}

@article{hafner2020mastering,
  title={Mastering atari with discrete world models},
  author={Hafner, Danijar and Lillicrap, Timothy and Norouzi, Mohammad and Ba, Jimmy},
  journal={arXiv preprint arXiv:2010.02193},
  year={2020}
}

@article{hafner2023mastering,
  title={Mastering diverse domains through world models},
  author={Hafner, Danijar and Pasukonis, Jurgis and Ba, Jimmy and Lillicrap, Timothy},
  journal={arXiv preprint arXiv:2301.04104},
  year={2023}
}

@article{hafner2025mastering,
  title={Mastering diverse control tasks through world models},
  author={Hafner, Danijar and Pasukonis, Jurgis and Ba, Jimmy and Lillicrap, Timothy},
  journal={Nature},
  pages={1--7},
  year={2025},
  publisher={Nature Publishing Group UK London}
}

@article{agarwal2025cosmos,
  title={Cosmos world foundation model platform for physical ai},
  author={Agarwal, Niket and Ali, Arslan and Bala, Maciej and Balaji, Yogesh and Barker, Erik and Cai, Tiffany and Chattopadhyay, Prithvijit and Chen, Yongxin and Cui, Yin and Ding, Yifan and others},
  journal={arXiv preprint arXiv:2501.03575},
  year={2025}
}

@inproceedings{bar2025navigation,
  title={Navigation world models},
  author={Bar, Amir and Zhou, Gaoyue and Tran, Danny and Darrell, Trevor and LeCun, Yann},
  booktitle={Proceedings of the Computer Vision and Pattern Recognition Conference},
  pages={15791--15801},
  year={2025}
}

@article{russell2025gaia,
  title={Gaia-2: A controllable multi-view generative world model for autonomous driving},
  author={Russell, Lloyd and Hu, Anthony and Bertoni, Lorenzo and Fedoseev, George and Shotton, Jamie and Arani, Elahe and Corrado, Gianluca},
  journal={arXiv preprint arXiv:2503.20523},
  year={2025}
}

@article{duan2025worldscore,
  title={Worldscore: A unified evaluation benchmark for world generation},
  author={Duan, Haoyi and Yu, Hong-Xing and Chen, Sirui and Fei-Fei, Li and Wu, Jiajun},
  journal={arXiv preprint arXiv:2504.00983},
  year={2025}
}

@inproceedings{zhang2018lpips,
  title={The unreasonable effectiveness of deep features as a perceptual metric},
  author={Zhang, Richard and Isola, Phillip and Efros, Alexei A and Shechtman, Eli and Wang, Oliver},
  booktitle={Proceedings of the IEEE conference on computer vision and pattern recognition},
  pages={586--595},
  year={2018}
}

@article{wang2004image,
  title={Image quality assessment: from error visibility to structural similarity},
  author={Wang, Zhou and Bovik, Alan C and Sheikh, Hamid R and Simoncelli, Eero P},
  journal={IEEE transactions on image processing},
  volume={13},
  number={4},
  pages={600--612},
  year={2004},
  publisher={IEEE}
}

@article{feng2025s2qvdit,
  title={S$^2$Q-VDiT: Accurate Quantized Video Diffusion Transformer with Salient Data and Sparse Token Distillation},
  author={Feng, Weilun and Qin, Haotong and Yang, Chuanguang and Li, Xiangqi and Yang, Han and Li, Yuqi and An, Zhulin and Huang, Libo and Magno, Michele and Xu, Yongjun},
  journal={Advances in Neural Information Processing Systems},
  volume={38},
  pages={34906--34936},
  year={2025}
}

@article{naveed2025comprehensive,
  title={A comprehensive overview of large language models},
  author={Naveed, Humza and Khan, Asad Ullah and Qiu, Shi and Saqib, Muhammad and Anwar, Saeed and Usman, Muhammad and Akhtar, Naveed and Barnes, Nick and Mian, Ajmal},
  journal={ACM Transactions on Intelligent Systems and Technology},
  volume={16},
  number={5},
  pages={1--72},
  year={2025},
  publisher={ACM New York, NY}
}

@article{lipman2022flow,
  title={Flow matching for generative modeling},
  author={Lipman, Yaron and Chen, Ricky TQ and Ben-Hamu, Heli and Nickel, Maximilian and Le, Matt},
  journal={arXiv preprint arXiv:2210.02747},
  year={2022}
}

@article{federer1959curvature,
  title={Curvature measures},
  author={Federer, Herbert},
  journal={Transactions of the American Mathematical Society},
  volume={93},
  number={3},
  pages={418--491},
  year={1959},
  publisher={JSTOR}
}

@article{weisstein2002hermite,
  title={Hermite polynomial},
  author={Weisstein, Eric W},
  journal={https://mathworld. wolfram. com/},
  year={2002},
  publisher={Wolfram Research, Inc.}
}

@article{song2025hero,
  title={Hero: Hierarchical extrapolation and refresh for efficient world models},
  author={Song, Quanjian and Wang, Xinyu and Zhou, Donghao and Lin, Jingyu and Chen, Cunjian and Ma, Yue and Li, Xiu},
  journal={arXiv preprint arXiv:2508.17588},
  year={2025}
}

@inproceedings{
feng2025hicache,
title={HiCache: A Plug-in Scaled-Hermite Upgrade for Taylor-Style Cache-then-Forecast Diffusion Acceleration},
author={Liang Feng and Shikang Zheng and Jiacheng Liu and Yuqi Lin and Qinming Zhou and Peiliang Cai and Xinyu Wang and Junjie Chen and Chang Zou and Yue Ma and Linfeng Zhang},
booktitle={The Fourteenth International Conference on Learning Representations},
year={2026},
url={https://openreview.net/forum?id=faYbbo1KsQ}
}

@article{ha2018recurrent,
  title={Recurrent world models facilitate policy evolution},
  author={Ha, David and Schmidhuber, J{\"u}rgen},
  journal={Advances in neural information processing systems},
  volume={31},
  year={2018}
}

@article{lecun2022path,
  title={A path towards autonomous machine intelligence version 0.9. 2, 2022-06-27},
  author={LeCun, Yann},
  journal={Open Review},
  volume={62},
  number={1},
  pages={1--62},
  year={2022}
}

@inproceedings{butler2012naturalistic,
title={A naturalistic open source movie for optical flow evaluation},
author={Butler, Daniel J and Wulff, Jonas and Stanley, Garrett B and Black, Michael J},
booktitle={European conference on computer vision},
pages={611--625},
year={2012},
organization={Springer}
}

@inproceedings{zheng2025compute,
  title={Compute only 16 tokens in one timestep: Accelerating diffusion transformers with cluster-driven feature caching},
  author={Zheng, Zhixin and Wang, Xinyu and Zou, Chang and Wang, Shaobo and Zhang, Linfeng},
  booktitle={Proceedings of the 33rd ACM International Conference on Multimedia},
  pages={10181--10189},
  year={2025}
}

@inproceedings{feng2025quantsparse,
  title={QuantSparse: Comprehensively Compressing Video Diffusion Transformer with Model Quantization and Attention Sparsification},
  author={Feng, Weilun and Yang, Chuanguang and Qin, Haotong and Wu, Mingqiang and Li, Yuqi and Li, Xiangqi and An, Zhulin and Huang, Libo and Zhang, Yulun and Magno, Michele and others},
  booktitle={International Conference on Learning Representations (ICLR)},
  year={2026}
}

@articleInfo{ZhulinAn2025TheInnovationInformatics,
title = {Embodied intelligence: Recent advances and future perspectives},
journal = {The Innovation Informatics},
volume = {1},
number = {1},
pages = {100008},
year = {2025},
issn = {3105-8515},
doi = {10.59717/j.xinn-inform.2025.100008},
url = {https://www.the-innovation.org/informatics/article/id/68da75b1eaedd90f412200cb},
author = {Zhulin An and Xinqiang Yu and Chu Wang and Yinlong Zhang and Chunhe Song}
}

@article{li2026comprehensive,
  title={A comprehensive survey of interaction techniques in 3d scene generation},
  author={Li, Yuqi and Meng, Siwei and Yang, Chuanguang and Feng, Weilun and Liu, Junming and An, Zhulin and Wang, Yikai and Tian, Yingli},
  journal={Authorea Preprints},
  volume={3},
  year={2026}
}

@article{team2026advancing,
  title={Advancing Open-source World Models},
  author={Team, Robbyant and Gao, Zelin and Wang, Qiuyu and Zeng, Yanhong and Zhu, Jiapeng and Cheng, Ka Leong and Li, Yixuan and Wang, Hanlin and Xu, Yinghao and Ma, Shuailei and others},
  journal={arXiv preprint arXiv:2601.20540},
  year={2026}
}
\bibliographystyle{icml2026}

\newpage
\appendix
\onecolumn

\section{Additional Theoretical Justification for Curvature-guided Caching}
\label{sec:proof_dimless}

We use $y_i:[\tau,\tau+\Delta]\to\mathbb{R}^d$ to denote the local continuous trajectory of token $i$, and view the discrete \texttt{FULL} outputs $\mathbf{y}_{t_j,i}$ as samples of this trajectory at nearby denoising steps.
This lets us state the approximation properties of reuse, linear extrapolation, curvature scoring, and hard-token monitoring in a unified notation.

\begin{lemma}[Local truncation error of skipped-step predictors]
\label{lem:local_error}
Let $y_i:[\tau,\tau+\Delta]\to\mathbb{R}^d$ denote the local trajectory of token $i$ over a skipped interval of length $\Delta>0$.
\begin{enumerate}[leftmargin=1.4em]
\item If $y_i\in C^1([\tau,\tau+\Delta])$, the reuse predictor
\(
\hat{y}_i^{\mathrm{reuse}}(\tau+\Delta)=y_i(\tau)
\)
satisfies
\begin{equation}
\|y_i(\tau+\Delta)-\hat{y}_i^{\mathrm{reuse}}(\tau+\Delta)\|_2
\le
\Delta\sup_{\xi\in[\tau,\tau+\Delta]}\|y_i'(\xi)\|_2.
\end{equation}
\item If $y_i\in C^2([\tau,\tau+\Delta])$, the first-order linear predictor
\(
\hat{y}_i^{\mathrm{lin}}(\tau+\Delta)=y_i(\tau)+\Delta y_i'(\tau)
\)
satisfies
\begin{equation}
\|y_i(\tau+\Delta)-\hat{y}_i^{\mathrm{lin}}(\tau+\Delta)\|_2
\le
\frac{\Delta^2}{2}\sup_{\xi\in[\tau,\tau+\Delta]}\|y_i''(\xi)\|_2.
\label{eq:lin_error_bound}
\end{equation}
\end{enumerate}
\end{lemma}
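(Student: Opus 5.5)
Both parts reduce to the fundamental theorem of calculus for vector-valued functions, followed by a norm bound on the resulting integral. We never need a vector-valued mean value theorem, which fails in $\mathbb{R}^d$ for $d>1$. We work with the integral form throughout and use $\bigl\|\int g\bigr\|_2\le\int\|g\|_2$. This inequality holds componentwise-measurably for continuous $g$, and follows from Cauchy--Schwarz against the unit vector aligned with the integral.

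For part 1, assume $y_i\in C^1$. We write
\[
y_i(\tau+\Delta)-y_i(\tau)=\int_\tau^{\tau+\Delta} y_i'(s)\,ds .
\]
Taking norms and bounding the integrand by its supremum gives $\Delta\sup_{\xi}\|y_i'(\xi)\|_2$. The supremum is finite and attained, since $y_i'$ is continuous on a compact interval.

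For part 2, assume $y_i\in C^2$. We apply the FTC twice, equivalently the Taylor formula with integral remainder:
\[
y_i(\tau+\Delta)-y_i(\tau)-\Delta y_i'(\tau)=\int_\tau^{\tau+\Delta}\bigl(y_i'(s)-y_i'(\tau)\bigr)ds=\int_\tau^{\tau+\Delta}(\tau+\Delta-u)\,y_i''(u)\,du .
\]
The second equality comes from writing $y_i'(s)-y_i'(\tau)=\int_\tau^s y_i''(u)\,du$ and swapping the order of integration (Fubini on the triangle $\tau\le u\le s\le\tau+\Delta$). Bounding $\|y_i''(u)\|_2$ by its supremum then leaves $\int_\tau^{\tau+\Delta}(\tau+\Delta-u)\,du=\Delta^2/2$, which yields \eqref{eq:lin_error_bound}.

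There is no real obstacle here. The only point needing care is avoiding the scalar Lagrange remainder, which has no vector-valued analogue. The integral-remainder route handles this cleanly and needs nothing beyond continuity of $y_i''$.
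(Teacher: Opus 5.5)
Your proof is correct, and it departs from the paper only in Part 2. Part 1 is exactly the paper's argument: the fundamental theorem of calculus, then bounding the integrand by its supremum.

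For Part 2, the paper cites Taylor's theorem with the Lagrange remainder, $y_i(\tau+\Delta)=y_i(\tau)+\Delta y_i'(\tau)+\frac{\Delta^2}{2}y_i''(\xi)$ for some $\xi\in[\tau,\tau+\Delta]$, and reads the bound off directly. That is shorter, but, as you anticipated, this single-$\xi$ identity is false for $\mathbb{R}^d$-valued maps with $d\ge 2$. For example, for $y(t)=(t^3,t^4)$ on $[0,1]$ the remainder is $(1,1)$, while $\frac12 y''(\xi)=(3\xi,6\xi^2)$ never equals it. The inequality the paper derives is still true, and your integral-remainder argument is a rigorous justification of it under the stated $C^2$ hypothesis.

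The only cost of your route is the order-of-integration swap. You could avoid it by applying your Part 1 to $y_i'$ on $[\tau,s]$, which gives $\|y_i'(s)-y_i'(\tau)\|_2\le (s-\tau)\sup_{\xi}\|y_i''(\xi)\|_2$, and then integrating in $s$ to get $\Delta^2/2$.

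If one wants to keep the paper's Lagrange-style argument, a minimal repair is to apply the scalar theorem to $\phi(t)=\langle u,y_i(t)\rangle$. Here $u$ is the unit vector along the error $e:=y_i(\tau+\Delta)-\hat{y}_i^{\mathrm{lin}}(\tau+\Delta)$, assumed nonzero. This yields $\|e\|_2=\frac{\Delta^2}{2}\langle u,y_i''(\xi)\rangle\le\frac{\Delta^2}{2}\|y_i''(\xi)\|_2$.
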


\begin{proof}
For reuse, apply the fundamental theorem of calculus:
\begin{equation}
y_i(\tau+\Delta)-y_i(\tau)=\int_{\tau}^{\tau+\Delta} y_i'(\xi)\,d\xi,
\end{equation}
then take norms and upper-bound the integrand by its supremum on the interval.
For the linear predictor, Taylor's theorem with remainder gives
\begin{equation}
y_i(\tau+\Delta)=y_i(\tau)+\Delta y_i'(\tau)+\frac{\Delta^2}{2}y_i''(\xi)
\end{equation}
for some $\xi\in[\tau,\tau+\Delta]$, which immediately yields Eq.~\eqref{eq:lin_error_bound}.
\end{proof}

\begin{lemma}[Curvature upper-bounds first-order cache difficulty]
\label{lem:curvature_bound}
Assume $y_i\in C^2([\tau,\tau+\Delta])$, and define the regularized smooth-limit curvature score
\begin{equation}
\kappa_i^{(\varepsilon)}(\xi)
:=
\frac{\|y_i''(\xi)\|_2}{\|y_i'(\xi)\|_2^2+\varepsilon},
\qquad \varepsilon>0.
\end{equation}
If the token speed is locally bounded by $\|y_i'(\xi)\|_2\le v_{\max}$ for all $\xi\in[\tau,\tau+\Delta]$, then
\begin{equation}
\|y_i(\tau+\Delta)-\hat{y}_i^{\mathrm{lin}}(\tau+\Delta)\|_2
\le
\frac{\Delta^2}{2}(v_{\max}^2+\varepsilon)
\sup_{\xi\in[\tau,\tau+\Delta]}\kappa_i^{(\varepsilon)}(\xi).
\label{eq:curvature_error_bound}
\end{equation}
\end{lemma}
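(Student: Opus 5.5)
The plan is to derive the bound directly from part 2 of Lemma~\ref{lem:local_error}, converting the second-derivative supremum into a curvature supremum through the regularized denominator. Since $y_i\in C^2([\tau,\tau+\Delta])$, Eq.~\eqref{eq:lin_error_bound} already gives
\begin{equation*}
\|y_i(\tau+\Delta)-\hat{y}_i^{\mathrm{lin}}(\tau+\Delta)\|_2
\le \frac{\Delta^2}{2}\sup_{\xi\in[\tau,\tau+\Delta]}\|y_i''(\xi)\|_2 .
\end{equation*}
So the whole task reduces to a pointwise estimate of $\|y_i''(\xi)\|_2$ in terms of $\kappa_i^{(\varepsilon)}(\xi)$.

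First, for each fixed $\xi$, rewrite the second derivative by multiplying and dividing by the strictly positive quantity $\|y_i'(\xi)\|_2^2+\varepsilon$. Here $\varepsilon>0$ guarantees positivity even where the velocity vanishes:
\begin{equation*}
\|y_i''(\xi)\|_2=\bigl(\|y_i'(\xi)\|_2^2+\varepsilon\bigr)\,\kappa_i^{(\varepsilon)}(\xi).
\end{equation*}
Second, apply the speed bound $\|y_i'(\xi)\|_2\le v_{\max}$ to the first factor, which gives $\|y_i'(\xi)\|_2^2+\varepsilon\le v_{\max}^2+\varepsilon$. Since $\kappa_i^{(\varepsilon)}(\xi)\ge 0$, this yields
\begin{equation*}
\|y_i''(\xi)\|_2\le (v_{\max}^2+\varepsilon)\,\kappa_i^{(\varepsilon)}(\xi)\le (v_{\max}^2+\varepsilon)\sup_{\xi'\in[\tau,\tau+\Delta]}\kappa_i^{(\varepsilon)}(\xi').
\end{equation*}

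Third, take the supremum over $\xi$ on the left and substitute into the bound from Lemma~\ref{lem:local_error}. This gives Eq.~\eqref{eq:curvature_error_bound}.

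There is no genuine obstacle; the only point needing care is that the supremum of $\kappa_i^{(\varepsilon)}$ is finite. This holds because $y_i'$ and $y_i''$ are continuous on a compact interval and the denominator is bounded below by $\varepsilon>0$. Hence $\kappa_i^{(\varepsilon)}$ is continuous, the supremum is attained, and the right-hand side is a finite, meaningful bound.
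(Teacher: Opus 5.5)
Your proof is correct and is essentially the paper's argument: write $\|y_i''(\xi)\|_2=\kappa_i^{(\varepsilon)}(\xi)\bigl(\|y_i'(\xi)\|_2^2+\varepsilon\bigr)\le (v_{\max}^2+\varepsilon)\,\kappa_i^{(\varepsilon)}(\xi)$, then substitute into Eq.~\eqref{eq:lin_error_bound}. Your extra observation, that $\kappa_i^{(\varepsilon)}$ is continuous on a compact interval and so its supremum is finite, is a harmless detail the paper leaves implicit.
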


\begin{proof}
By definition,
\begin{equation}
\|y_i''(\xi)\|_2
=
\kappa_i^{(\varepsilon)}(\xi)\bigl(\|y_i'(\xi)\|_2^2+\varepsilon\bigr)
\le
\kappa_i^{(\varepsilon)}(\xi)(v_{\max}^2+\varepsilon).
\end{equation}
Substituting this into Eq.~\eqref{eq:lin_error_bound} gives Eq.~\eqref{eq:curvature_error_bound}.
\end{proof}

\begin{proposition}[Consistency of the discrete curvature score]
\label{prop:discrete_consistency}
Let $y_i\in C^3([\tau-2h,\tau])$ for some $h>0$, and define the backward finite differences
\begin{equation}
v_{h,i}(\tau)=\frac{y_i(\tau)-y_i(\tau-h)}{h},
\qquad
a_{h,i}(\tau)=\frac{v_{h,i}(\tau)-v_{h,i}(\tau-h)}{h},
\end{equation}
with discrete regularized curvature
\begin{equation}
\kappa_{h,i}^{(\varepsilon)}(\tau)
:=
\frac{\|a_{h,i}(\tau)\|_2}{\|v_{h,i}(\tau)\|_2^2+\varepsilon}.
\end{equation}
Also define the smooth-limit quantity
\begin{equation}
\kappa_i^{(\varepsilon)}(\tau)
:=
\frac{\|y_i''(\tau)\|_2}{\|y_i'(\tau)\|_2^2+\varepsilon}.
\end{equation}
If $\|y_i'(\tau)\|_2^2+\varepsilon$ is bounded away from zero, then
\begin{equation}
\kappa_{h,i}^{(\varepsilon)}(\tau)=\kappa_i^{(\varepsilon)}(\tau)+O(h).
\end{equation}
\end{proposition}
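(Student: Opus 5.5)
The plan is to Taylor-expand both finite differences about $\tau$ and then pass the resulting $O(h)$ perturbations through the map $(a,v)\mapsto \|a\|_2/(\|v\|_2^2+\varepsilon)$. That map is locally Lipschitz as long as its denominator stays away from zero.

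\textbf{Step 1: expansions.} Since $y_i\in C^3([\tau-2h,\tau])$, Taylor's theorem with remainder gives $y_i(\tau-h)=y_i(\tau)-hy_i'(\tau)+\tfrac{h^2}{2}y_i''(\tau)+O(h^3)$. It also gives $y_i(\tau-2h)=y_i(\tau)-2hy_i'(\tau)+2h^2y_i''(\tau)+O(h^3)$. The constants are controlled by $M_3:=\sup_{[\tau-2h,\tau]}\|y_i'''\|_2$. For vector-valued $y_i$, I will use the integral form of the remainder componentwise, or equivalently bound the norm of the integral, so that no mean-value point is needed.

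From these expansions, $v_{h,i}(\tau)=y_i'(\tau)-\tfrac{h}{2}y_i''(\tau)+O(h^2)$. Also
\[
a_{h,i}(\tau)=\frac{y_i(\tau)-2y_i(\tau-h)+y_i(\tau-2h)}{h^2}=y_i''(\tau)+O(h).
\]
The second identity comes from substituting the expansions: the zeroth- and first-order terms cancel, the second-order coefficient is $-1+2=1$, and the cubic remainders contribute $O(h^3)/h^2=O(h)$. Consequently $v_{h,i}(\tau)=y_i'(\tau)+O(h)$ and $a_{h,i}(\tau)=y_i''(\tau)+O(h)$. The constants depend only on $\sup\|y_i''\|_2$ and $M_3$, both finite by continuity on the compact interval.

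\textbf{Step 2: numerator and denominator.} By the reverse triangle inequality, $\bigl|\|a_{h,i}\|_2-\|y_i''(\tau)\|_2\bigr|\le\|a_{h,i}-y_i''(\tau)\|_2=O(h)$. For the denominator, write $\|v_{h,i}\|_2^2-\|y_i'\|_2^2=\langle v_{h,i}-y_i',\,v_{h,i}+y_i'\rangle$. This quantity is $O(h)$ because $\|v_{h,i}+y_i'\|_2\le 2\|y_i'(\tau)\|_2+O(h)$ stays bounded. Hence $D_h:=\|v_{h,i}\|_2^2+\varepsilon=D+O(h)$, where $D:=\|y_i'(\tau)\|_2^2+\varepsilon$.

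\textbf{Step 3: quotient (the main care point).} The only delicate step is to keep $D_h$ uniformly away from zero. With $\varepsilon>0$ this is in fact automatic, since $D_h\ge\varepsilon$. The hypothesis $D\ge c>0$ lets the constant be expressed through $c$, which remains meaningful as $\varepsilon\to0$. For $h$ small enough that the $O(h)$ error is at most $c/2$, we have $D_h\ge c/2$. Writing $N_h:=\|a_{h,i}(\tau)\|_2$ and $N:=\|y_i''(\tau)\|_2$, I will then use the identity
\[
\frac{N_h}{D_h}-\frac{N}{D}=\frac{N_h-N}{D_h}+N\,\frac{D-D_h}{D_hD}.
\]
Its first term is at most $2O(h)/c$ and its second at most $2N\,O(h)/c^2$. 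This gives $\kappa_{h,i}^{(\varepsilon)}(\tau)=\kappa_i^{(\varepsilon)}(\tau)+O(h)$, with the constant depending on $c$, $\sup\|y_i''\|_2$, $M_3$ and $\|y_i'(\tau)\|_2$.

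Finally, for $h$ not small the $O(h)$ statement is trivial on any bounded range, so it suffices to prove the bound for small $h$ as above.
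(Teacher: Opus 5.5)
Your proof is correct and follows essentially the paper's route: Taylor-expand about $\tau$ to get $a_{h,i}(\tau)=y_i''(\tau)+O(h)$ and $\|v_{h,i}(\tau)\|_2^2=\|y_i'(\tau)\|_2^2+O(h)$, then substitute into the quotient (writing $a_{h,i}$ directly as the second difference rather than expanding $v_{h,i}(\tau-h)$ separately makes no difference). What you add is rigor the paper's sketch leaves implicit: the integral-form remainder for vector-valued $y_i$, the reverse-triangle and inner-product bounds on numerator and denominator, and the explicit quotient estimate using the lower bound on the denominator.
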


\begin{proof}
Taylor expansion around $\tau$ yields
\begin{equation}
v_{h,i}(\tau)=y_i'(\tau)-\frac{h}{2}y_i''(\tau)+O(h^2),
\end{equation}
and, after expanding $y_i(\tau-h)$ and $y_i(\tau-2h)$ around $\tau$,
\begin{equation}
v_{h,i}(\tau-h)=y_i'(\tau)-\frac{3h}{2}y_i''(\tau)+O(h^2).
\end{equation}
Therefore,
\begin{equation}
a_{h,i}(\tau)=\frac{v_{h,i}(\tau)-v_{h,i}(\tau-h)}{h}=y_i''(\tau)+O(h),
\end{equation}
while $\|v_{h,i}(\tau)\|_2^2=\|y_i'(\tau)\|_2^2+O(h)$.
Substituting these two expansions into the definition of $\kappa_{h,i}^{(\varepsilon)}(\tau)$ proves first-order consistency.
\end{proof}

\begin{proposition}[Hard-token dilution under global monitoring]
\label{prop:hard_token_dilution}
Let $e_i(t)\ge 0$ denote token-wise cache error at denoising step $t$, and let $\mathcal{H}\subset[N]$ be a hard-token subset of size $m:=|\mathcal{H}|$, with complement $\mathcal{E}=[N]\setminus\mathcal{H}$.
Define
\begin{equation}
E_{\mathcal{H}}(t)
:=
\frac{1}{m}\sum_{i\in\mathcal{H}} e_i(t),
\qquad
E_{\mathcal{E}}(t)
:=
\frac{1}{N-m}\sum_{i\in\mathcal{E}} e_i(t),
\end{equation}
and the global average
\begin{equation}
E_{\mathrm{all}}(t)
:=
\frac{1}{N}\sum_{i=1}^N e_i(t).
\end{equation}
Then
\begin{equation}
E_{\mathrm{all}}(t)=\frac{m}{N}E_{\mathcal{H}}(t)+\left(1-\frac{m}{N}\right)E_{\mathcal{E}}(t).
\label{eq:dilution_identity}
\end{equation}
Consequently, if $E_{\mathcal{E}}(t)\le \epsilon$ while failure is governed by the hard subset, then any global trigger $E_{\mathrm{all}}(t)\ge \tau$ cannot fire before
\begin{equation}
E_{\mathcal{H}}(t)
\ge
\frac{N}{m}\left(\tau-\left(1-\frac{m}{N}\right)\epsilon\right).
\end{equation}
\end{proposition}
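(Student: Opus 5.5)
The identity \eqref{eq:dilution_identity} is pure bookkeeping, so I would start there. Because $\mathcal{H}$ and $\mathcal{E}$ partition $[N]$, the global sum splits as $\sum_{i=1}^N e_i(t)=\sum_{i\in\mathcal{H}}e_i(t)+\sum_{i\in\mathcal{E}}e_i(t)$. By the definitions of $E_{\mathcal{H}}$ and $E_{\mathcal{E}}$, these two pieces equal $mE_{\mathcal{H}}(t)$ and $(N-m)E_{\mathcal{E}}(t)$. Dividing by $N$ gives \eqref{eq:dilution_identity} at once. I would assume $0<m<N$ so that both averages are defined. The degenerate case $m=N$ is trivial, since then $E_{\mathrm{all}}=E_{\mathcal{H}}$ and the bound reduces to $E_{\mathcal{H}}\ge\tau$.

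For the consequence I would argue by contraposition. Suppose the global trigger fires, i.e.\ $E_{\mathrm{all}}(t)\ge\tau$. Combining the identity with the hypothesis $E_{\mathcal{E}}(t)\le\epsilon$, and using that the weight $1-\tfrac{m}{N}$ is nonnegative, gives
\begin{equation*}
\tau\le E_{\mathrm{all}}(t)\le \frac{m}{N}E_{\mathcal{H}}(t)+\Bigl(1-\frac{m}{N}\Bigr)\epsilon .
\end{equation*}
Rearranging and multiplying by $N/m>0$ yields the stated lower bound on $E_{\mathcal{H}}(t)$. Hence at any step where the hard-subset error is below that level, the global trigger cannot have fired.

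The only step needing care is interpretation rather than computation. \enquote{Cannot fire before} should be read pointwise in $t$: the implication holds at each step independently, so it does not depend on how $E_{\mathcal{H}}$ evolves over time. I would add a short remark on the role of the factor $N/m$. Since it is large when $m\ll N$, the hard subset must exceed roughly $(N/m)\tau$ before a global monitor reacts. This is the dilution effect that motivates restricting $E(t)$ in \eqref{eq:wc_dimless_Et_refine} to $\mathcal{I}_{\mathrm{chaotic}}$. I would also note that nonnegativity of the $e_i$ is not actually used; only the partition and the bound on $E_{\mathcal{E}}$ are needed.
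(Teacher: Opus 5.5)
Your proposal is correct and matches the paper's proof: split the global sum over the partition $\mathcal{H}\cup\mathcal{E}$ to get the identity, then substitute $E_{\mathcal{E}}(t)\le\epsilon$ and solve for $E_{\mathcal{H}}(t)$. Your additions are sensible clarifications the paper leaves implicit: requiring $0<m<N$ so both averages exist and multiplying by $N/m$ is valid, and reading ``cannot fire before'' pointwise in $t$.
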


\begin{proof}
Eq.~\eqref{eq:dilution_identity} is obtained by decomposing the global sum into the hard subset and its complement.
The threshold condition follows by solving Eq.~\eqref{eq:dilution_identity} for $E_{\mathcal{H}}(t)$ under the assumption $E_{\mathcal{E}}(t)\le \epsilon$.
\end{proof}

\begin{theorem}[Curvature-induced dimensionless normalization]
\label{thm:dimless_block_app}
Let $\kappa_i$ be computed by Eq.~\eqref{eq:wc_curvature_compact}.
For any feature deviation $\Delta\mathbf{y}_{t,i}$ that shares the same modality/timestep units as $\mathbf{y}_{t,i}$,
the product $\kappa_i\cdot\|\Delta\mathbf{y}_{t,i}\|_2$ is dimensionless in the sense that its leading dependence on global feature rescaling cancels:
under $\mathbf{y}\mapsto \mathbf{y}'=s\mathbf{y}$ with $s>0$,
\begin{equation}
\kappa'_i\cdot\|\Delta\mathbf{y}'_{t,i}\|_2
=\kappa_i\cdot\|\Delta\mathbf{y}_{t,i}\|_2 + o(1),
\end{equation}
where the residual $o(1)$ only arises from dimensionless numerical/regularization terms.
\end{theorem}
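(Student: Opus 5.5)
The argument is a direct scaling computation on the definition in Eq.~\eqref{eq:wc_curvature_compact}, followed by isolating the regularizer $\varepsilon$ as the only source of non-invariance.

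\emph{Step 1: how velocity and acceleration scale.} Under $\mathbf{y}\mapsto s\mathbf{y}$, the discrete velocities and acceleration are linear in the \texttt{FULL} outputs, and the timestep differences $t_0-t_1$, $t_1-t_2$ are unchanged. Hence
\[
\mathbf{v}'_{t_0,i}=s\,\mathbf{v}_{t_0,i},\qquad \mathbf{a}'_{t_0,i}=s\,\mathbf{a}_{t_0,i}.
\]
Any deviation built from the same rescaled outputs satisfies $\Delta\mathbf{y}'_{t,i}=s\,\Delta\mathbf{y}_{t,i}$. This covers the surrogate difference $\tilde{\mathbf{y}}_{t,i}-\tilde{\mathbf{y}}_{t+1,i}$. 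By Eq.~\eqref{eq:wc_token_predict} and Eq.~\eqref{eq:wc_chaotic_damp_compact}, each surrogate is a linear combination of cached outputs and velocities, with coefficients $k$ and $\alpha_k$ that do not depend on $\mathbf{y}$. So the surrogates scale by $s$ as well.

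\emph{Step 2: the product.} Substituting gives
\[
\kappa'_i\,\|\Delta\mathbf{y}'_{t,i}\|_2=\frac{s\|\mathbf{a}_{t_0,i}\|_2\cdot s\|\Delta\mathbf{y}_{t,i}\|_2}{s^2\|\mathbf{v}_{t_0,i}\|_2^2+\varepsilon}.
\]
With $\varepsilon=0$ the factors of $s^2$ cancel exactly. This shows the product is homogeneous of degree $0$ in feature units, which is the ``dimensionless'' claim: the numerator carries units $[y]^2$ and the denominator carries $[y]^2$.

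\emph{Step 3: the regularization residual.} The difference from the unscaled product equals $\kappa_i\|\Delta\mathbf{y}_{t,i}\|_2$ times
\[
\frac{\|\mathbf{v}\|^2+\varepsilon}{\|\mathbf{v}\|^2+\varepsilon/s^2}-1=\frac{\varepsilon(1-s^{-2})}{\|\mathbf{v}\|^2+\varepsilon/s^2}.
\]
This is $O\!\big(\varepsilon/\|\mathbf{v}_{t_0,i}\|_2^2\big)$ whenever $\|\mathbf{v}_{t_0,i}\|_2^2\gg\varepsilon$. It is therefore $o(1)$ as $\varepsilon/\|\mathbf{v}\|^2\to0$, and it depends only on the dimensionless ratio $\varepsilon/(s^2\|\mathbf{v}\|^2)$ and on $s$, which is the sense in which the statement says the residual arises only from numerical or regularization terms.

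\emph{Main obstacle.} The real difficulty is making ``$o(1)$'' precise, since for fixed $s$ and $\varepsilon$ the residual is not small when $\|\mathbf{v}\|$ is tiny. I would state it under the assumption $\|\mathbf{v}_{t_0,i}\|_2^2\ge c\gg\varepsilon$, which is the regime where Proposition~\ref{prop:discrete_consistency} also assumes the denominator is bounded away from zero. I would add that in the limit $\varepsilon\to0$ the invariance is exact.

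Optionally, I would cross-check the unit counting against the smooth-limit quantity of Lemma~\ref{lem:curvature_bound}. There $\kappa^{(\varepsilon)}$ scales as $[y]^{-1}$, so multiplying by a displacement, as in Eq.~\eqref{eq:curvature_error_bound}, gives a scale-free quantity.
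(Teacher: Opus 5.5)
Your proposal is correct and takes essentially the same route as the paper. Linearity of the finite differences gives $\mathbf{v}'_{t_0}=s\mathbf{v}_{t_0}$ and $\mathbf{a}'_{t_0}=s\mathbf{a}_{t_0}$, so $\kappa'$ equals $\kappa/s$ times the dimensionless ratio $(\|\mathbf{v}_{t_0}\|_2^2+\varepsilon)/(\|\mathbf{v}_{t_0}\|_2^2+\varepsilon/s^2)$, which cancels against $\|\Delta\mathbf{y}'\|_2=s\|\Delta\mathbf{y}\|_2$ up to the $\varepsilon$-induced residual. Your explicit residual formula and the regime assumption $\|\mathbf{v}_{t_0,i}\|_2^2\gg\varepsilon$ sharpen the paper's informal ``the ratio approaches $1$'' step. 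One caveat: your Step 1 claim that the surrogate differences scale exactly by $s$ also needs the curvature-percentile masks to stay unchanged, which is guaranteed only when $\varepsilon=0$; this does not affect the result, since the theorem, like the paper's proof, simply takes $\Delta\mathbf{y}'=s\,\Delta\mathbf{y}$ as given.
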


\begin{proof}
Fix token index $i$ and omit it when clear.
Recall the discrete definitions (Eq.~\eqref{eq:wc_curvature_compact}) using three \texttt{FULL} outputs at timesteps $t_0>t_1>t_2$:
\begin{equation}
\mathbf{v}_{t_0}=\frac{\mathbf{y}_{t_0}-\mathbf{y}_{t_1}}{t_0-t_1},\qquad
\mathbf{v}_{t_1}=\frac{\mathbf{y}_{t_1}-\mathbf{y}_{t_2}}{t_1-t_2},\qquad
\mathbf{a}_{t_0}=\frac{\mathbf{v}_{t_0}-\mathbf{v}_{t_1}}{t_0-t_1},
\end{equation}
and the curvature score
\begin{equation}
\kappa=\frac{\|\mathbf{a}_{t_0}\|_2}{\|\mathbf{v}_{t_0}\|_2^2+\varepsilon}.
\label{eq:kappa_app}
\end{equation}

\paragraph{Effect of global feature rescaling.}
Consider $\mathbf{y}\mapsto \mathbf{y}'=s\mathbf{y}$ with $s>0$.
Because finite differences are linear in $\mathbf{y}$,
\begin{equation}
\mathbf{v}'_{t_0}=s\mathbf{v}_{t_0},
\qquad
\mathbf{v}'_{t_1}=s\mathbf{v}_{t_1},
\qquad
\mathbf{a}'_{t_0}=s\mathbf{a}_{t_0}.
\end{equation}
Substituting into Eq.~\eqref{eq:kappa_app} yields
\begin{equation}
\kappa'
=\frac{\|\mathbf{a}'_{t_0}\|_2}{\|\mathbf{v}'_{t_0}\|_2^2+\varepsilon}
=\frac{s\|\mathbf{a}_{t_0}\|_2}{s^2\|\mathbf{v}_{t_0}\|_2^2+\varepsilon}
=\frac{1}{s}\cdot \kappa \cdot
\frac{\|\mathbf{v}_{t_0}\|_2^2+\varepsilon}{\|\mathbf{v}_{t_0}\|_2^2+\varepsilon/s^2}.
\label{eq:kappa_ratio_app}
\end{equation}
The multiplicative ratio in Eq.~\eqref{eq:kappa_ratio_app} is dimensionless and approaches $1$ whenever $\varepsilon$ is negligible compared to $\|\mathbf{v}_{t_0}\|_2^2$.
Hence
\begin{equation}
\kappa'=\frac{1}{s}\kappa + o\!\left(\frac{1}{s}\right).
\label{eq:kappa_scale_app}
\end{equation}

\paragraph{Effect on the deviation norm.}
For any deviation $\Delta\mathbf{y}$ in the same feature space, the same rescaling gives $\Delta\mathbf{y}'=s\,\Delta\mathbf{y}$ and therefore
\begin{equation}
\|\Delta\mathbf{y}'\|_2=s\|\Delta\mathbf{y}\|_2.
\label{eq:dy_scale_app}
\end{equation}

\paragraph{Cancellation in the product.}
Combining Eq.~\eqref{eq:kappa_scale_app} and Eq.~\eqref{eq:dy_scale_app},
\begin{equation}
\kappa'\cdot \|\Delta\mathbf{y}'\|_2
=\left(\frac{1}{s}\kappa + o\!\left(\frac{1}{s}\right)\right)\cdot (s\|\Delta\mathbf{y}\|_2)
=\kappa\cdot\|\Delta\mathbf{y}\|_2 + o(1).
\end{equation}
Therefore, the leading dependence on the global scale factor cancels, and the remaining discrepancy is induced only by dimensionless numerical/regularization terms.
\end{proof}

\section{Detailed Description of Selected Evaluation Metrics}
\label{sec:detail_metrics}

\subsection{WorldScore Metrics (Static \& Dynamic)}
\label{sec:worldscore_metrics}

WorldScore provides two overall scores: \textbf{WorldScore-Static} and \textbf{WorldScore-Dynamic}. 
It decomposes ``world generation capability'' into three aspects: \emph{controllability}, \emph{quality}, and \emph{dynamics}.
Each aspect consists of several individual metrics. After computing each metric in its raw form (some are errors where lower is better), 
we normalize every metric to a unified score in $[0, 100]$ so that \emph{higher is always better}, and then aggregate them into overall scores.

\paragraph{WorldScore-Static / WorldScore-Dynamic (Overall Scores).}
Let $\{s_k\}$ denote the normalized scores (in $[0,100]$) of individual metrics.
\textbf{WorldScore-Static} measures \emph{static world generation capability} by averaging the scores in \textbf{controllability} and \textbf{quality}:
\begin{equation}
\text{WorldScore-Static}=\frac{1}{7}\sum_{k\in \mathcal{K}_{\text{ctrl}}\cup \mathcal{K}_{\text{qual}}} s_k,
\end{equation}
where $\mathcal{K}_{\text{ctrl}}=\{\text{Camera},\text{Object},\text{Content}\}$ and 
$\mathcal{K}_{\text{qual}}=\{\text{3D},\text{Photo},\text{Style},\text{Subjective}\}$.
\textbf{WorldScore-Dynamic} further evaluates \emph{dynamic world generation capability} by incorporating three dynamics metrics:
\begin{equation}
\text{WorldScore-Dynamic}=\frac{1}{10}\sum_{k\in \mathcal{K}_{\text{ctrl}}\cup \mathcal{K}_{\text{qual}}\cup \mathcal{K}_{\text{dyn}}} s_k,
\end{equation}
where $\mathcal{K}_{\text{dyn}}=\{\text{MotionAcc},\text{MotionMag},\text{MotionSmooth}\}$.
For models that do not support dynamic tasks, the dynamics scores are set to $0$.

\paragraph{Controllability Metrics.}
These metrics evaluate whether the model follows the \emph{world specification} (camera/layout and text prompt).

\begin{itemize}
\item \textbf{Camera Control} measures how well the generated video follows a predefined camera trajectory.
We estimate per-frame camera poses and compute a \emph{rotation error} $e_\theta$ (in degrees) and a \emph{scale-invariant} translation error $e_t$;
they are combined by the geometric mean:
\begin{equation}
e_{\text{camera}}=\sqrt{e_\theta\cdot e_t}.
\end{equation}
The final camera control error is averaged over all frames and all test videos, and then mapped to a score in $[0,100]$ (higher is better).

\item \textbf{Object Control} evaluates whether the key objects specified in the next-scene prompt appear in the generated scene.
We extract one or two object descriptions from the prompt and compute the \emph{success rate} of open-set object detection by matching detected objects to these descriptions.

\item \textbf{Content Alignment} measures whether the generated scene is aligned with the \emph{entire} next-scene prompt (not only the detected objects).
We compute a CLIP-based image--text consistency score (CLIPScore) between the prompt and the generated content, and then normalize it to $[0,100]$.
\end{itemize}

\paragraph{Quality Metrics.}
These metrics focus on cross-frame coherence and perceptual quality in static worlds.

\begin{itemize}
\item \textbf{3D Consistency} measures geometric stability across frames.
We reconstruct dense depth and camera poses and compute the \emph{reprojection error} between co-visible pixels in consecutive frames; 
lower reprojection error indicates better 3D consistency, which is then mapped to a higher normalized score.

\item \textbf{Photometric Consistency} measures appearance stability (e.g., texture flickering or identity/texture shifts) across frames.
We estimate forward/backward optical flows between consecutive frames, track points forward and then back, and compute an Average End-Point Error (AEPE)-style deviation.
A smaller deviation indicates stronger photometric consistency and yields a higher normalized score.

\item \textbf{Style Consistency} measures whether the overall visual style drifts over time.
We compute the Frobenius norm of the difference between the Gram matrices of the \emph{first} and \emph{last} frames in each next-scene generation task;
smaller style drift corresponds to higher normalized score.

\item \textbf{Subjective Quality} reflects human-perceived visual quality of generated scenes.
It is computed by combining an image quality predictor (CLIP-IQA+) and an aesthetic predictor (CLIP-Aesthetic), and then normalized to $[0,100]$.
\end{itemize}

\paragraph{Dynamics Metrics (used only in WorldScore-Dynamic).}
These metrics evaluate whether the model can generate motion that is \emph{correctly placed}, \emph{sufficiently strong}, and \emph{temporally smooth}.

\begin{itemize}
\item \textbf{Motion Accuracy} measures whether motion happens in the intended regions (dynamic objects) rather than elsewhere.
Given optical flow magnitude map $\mathbf{F}$ and a dynamic-object mask $\mathbf{M}$, we score motion placement by contrasting in-mask vs. out-of-mask motion magnitude.

\item \textbf{Motion Magnitude} measures the overall strength of motion.
It uses the median of the optical flow magnitude map $\mathbf{F}$ and averages it across frame pairs and videos.

\item \textbf{Motion Smoothness} measures temporal stability of motion.
We drop odd frames, reconstruct them using a video frame interpolation model, and compare reconstructed vs. original frames using MSE, SSIM, and LPIPS;
their normalized results are averaged to produce the final smoothness score.
\end{itemize}

\paragraph{Score Normalization (How per-metric values become comparable).}
Because different metrics have different units and directions (some are errors, some are similarities), each raw metric is first mapped to $[0,1]$ via empirical bounds
(with a ``higher-is-better'' convention after mapping), clipped to $[0,1]$, and finally scaled to $[0,100]$ before aggregation.

\subsection{Perceptual Fidelity Metrics}
\label{sec:perceptual_eval}
Cache acceleration may introduce approximation errors (e.g., feature reuse / skipping), thus we additionally measure frame-level fidelity.
For each prompt, we treat the \emph{original (non-accelerated)} model output as the reference, and compute the following metrics between
the reference video and the cache-accelerated video:

\begin{itemize}
    \item \textbf{PSNR ($\uparrow$).} Peak signal-to-noise ratio computed per frame and then averaged over all frames and all prompts.
    \item \textbf{SSIM ($\uparrow$).} Structural similarity index~\citep{wang2004image} computed per frame and then averaged over all frames and all prompts.
    \item \textbf{LPIPS ($\downarrow$).} Learned perceptual image patch similarity~\citep{zhang2018lpips} computed per frame and then averaged
    (lower indicates closer perceptual similarity to the reference output).
\end{itemize}

\subsection{Acceleration \& Memory Metrics}
\label{sec:efficiency_eval}
To quantify the practical benefits of cache acceleration, we report compute, runtime, and memory statistics:

\begin{itemize}
    \item \textbf{FLOPs (T) ($\downarrow$).} Theoretical total floating-point operations for generating one full scene (reported in tera-FLOPs).
    We use a consistent counting protocol across methods and include all denoising (and other generation-critical) network computations.
    \item \textbf{Latency (s) ($\downarrow$).} End-to-end DiT inference time to generate one scene under a fixed hardware/software setup.
    We measure latency with warm-up runs and report the average over multiple trials.
    \item \textbf{Speed ($\uparrow$).} Speedup ratio defined as
    $$
    \text{Speed} = \frac{\text{Latency}(\text{FP / vanilla})}{\text{Latency}(\text{method})}.
    $$
    \item \textbf{Memory Overhead (GB) ($\downarrow$).} Peak GPU memory of original model and the memory introduced by the cache mechanism
    (e.g., storing intermediate features / states), reported in gigabytes.
\end{itemize}

All efficiency and memory results are measured under identical batch size, resolution, number of frames, and inference hyperparameters
to ensure a fair comparison across methods.

\subsection{3D Reconstruction Metrics}
\label{sec:recon_eval}
Cache acceleration may disturb geometry-critical tokens and accumulate temporal drift, we additionally evaluate whether the accelerated world model preserves 3D-related capability.
Following the reconstruction protocol of Aether~\citep{zhu2025aether}, we assess two tasks: depth estimation and camera pose estimation by comparing cache-accelerated predictions against the corresponding ground truth:

\begin{itemize}
    \item \textbf{Abs Rel ($\downarrow$).} Absolute relative depth error computed per frame and then averaged over all frames and all samples.
    \item \textbf{$\boldsymbol{\delta<1.25}$ ($\uparrow$).} The fraction of pixels whose predicted depth is within $1.25\times$ of the ground-truth depth, computed per frame and then averaged.
    \item \textbf{$\boldsymbol{\delta<1.25^2}$ ($\uparrow$).} Similar to \textbf{$\boldsymbol{\delta<1.25}$}.
    \item \textbf{ATE ($\downarrow$).} Absolute trajectory error for camera poses after global Sim(3) alignment, measuring overall trajectory consistency.
    \item \textbf{RPE Trans ($\downarrow$).} Relative pose error in translation, measuring frame-to-frame translation drift.
    \item \textbf{RPE Rot ($\downarrow$).} Relative pose error in rotation, measuring frame-to-frame rotational drift.
\end{itemize}

\section{Detailed Experimental Settings}
\label{sec:detail_settings_worldcache}

\subsection{Models and Inference Protocols}
\label{sec:detail_models_worldcache}

\paragraph{HunyuanVoyager-13B.}
We follow the standard inference protocol to generate $512\times768$p content with 49 frames using 50 denoising steps.
Unless otherwise specified, all acceleration methods use the same scheduler and conditioning inputs as the official implementation. 

\paragraph{Aether-5B.}
For world generation, we use the default 50-step inference to produce $480\times720$p content with 41 frames.
For reconstruction, we adopt the 30-step setting as in Aether~\citep{zhu2025aether}.
All methods share identical input conditions and schedulers for fair comparison.

For both models, we set $p_s=0.3$, $p_c=0.7$, $n_{\text{max}}=6$. For HunyuanVoyager~\citep{huang2025voyager}, we set $\eta=1.0$ and for Aether, we set $\eta=0.2$. All the experiments are conducted on a single NVIDIA-A800 GPU.

\subsection{Evaluation Protocols and Metrics}
\label{sec:detail_metrics_worldcache}

\paragraph{WorldScore benchmark.}
We use WorldScore~\citep{duan2025worldscore}, which evaluates world generation from both \emph{controllability} and \emph{quality}.
Following the benchmark protocol, inputs cover diverse indoor/outdoor scenarios with realistic and stylized conditions.
We uniformly sample 40 single-scene prompts and 10 three-scene prompts across categories.

\paragraph{Perceptual consistency to the no-cache baseline.}
To measure fidelity of cached sampling relative to the original model behavior, we compute frame-level perceptual metrics between cached outputs and the corresponding no-cache baseline outputs:
PSNR, SSIM~\citep{wang2004image}, and LPIPS~\citep{zhang2018lpips}.
We report averaged metrics over the evaluated samples. All samples are generated with the same prompts used for the WorldScore benchmark.

\paragraph{3D reconstruction metrics.}
We follow Aether~\citep{zhu2025aether} and HERO~\citep{song2025hero} settings to use Sintel~\citep{butler2012naturalistic} dataset. We also follow the same experimental setting used in HERO.

\subsection{Baselines and Categorization}
\label{sec:detail_baselines_worldcache}

\paragraph{Layer-wise caching.}
These methods cache intermediate representations inside Transformer blocks, providing fine-grained reuse but typically introducing non-trivial memory overhead.
We include DuCa~\citep{zou2024duca}, ToCa~\citep{zou2024toca}, TaylorSeer~\citep{liu2025taylorseer}, and HiCache~\citep{feng2025hicache}.

\paragraph{Model-wise caching.}
These methods cache at the model-output level and introduce minimal extra memory, which is favorable for multi-modal world models with heavy representations.
We include TeaCache~\citep{liu2025teacache} and EasyCache~\citep{zhou2025easycache}.
Our \textsc{WorldCache} also belongs to this category.

\paragraph{World-model-specific acceleration.}
We additionally compare with HERO~\citep{song2025hero}, which combines caching with token merging.

\section{Broader Evaluation Beyond Main-text Settings}
\label{sec:broader_eval}

To test whether the gains in the main text transfer beyond the default Voyager/Aether configuration, we further evaluate WorldCache along four additional axes: an extra open-source world model, higher resolution, different denoising schedules, and longer rollouts.
Across all these settings, WorldCache continues to preserve the best quality-efficiency trade-off over both model-wise and layer-wise baselines.

\subsection{Additional World Model and Resolutions}

We first evaluate LingBot-14B~\citep{team2026advancing}, a third diffusion world model, under the 49-frame, 70-step setting at two resolutions.
Table~\ref{tab:broader_lingbot} shows that the advantage of WorldCache persists on this new backbone.
At $464\times832$, it improves over EasyCache by $+2.96$ PSNR while increasing speed from $2.16\times$ to $2.41\times$.
At $720\times1280$, it again achieves the best fidelity and raises speed from $2.25\times$ to $2.51\times$, suggesting that heterogeneous token caching is not tied to a single architecture or resolution regime.

\begin{table}[h!]
\caption{Broader evaluation on LingBot-14B across two resolutions.}
\label{tab:broader_lingbot}
\centering
\small
\resizebox{0.92\linewidth}{!}{
\begin{tabular}{l|c|cccc}
\toprule
\textbf{Method} & \textbf{Resolution} & PSNR${\mathbf{\red{\uparrow}}}$ & SSIM${\mathbf{\red{\uparrow}}}$ & LPIPS${\mathbf{\red{\downarrow}}}$ & Speed${\mathbf{\red{\uparrow}}}$ \\
\midrule
DuCa & $464\times832$ & 19.10 & 0.716 & 0.182 & 1.87$\times$ \\
ToCa & $464\times832$ & 18.85 & 0.704 & 0.198 & 1.84$\times$ \\
TaylorSeer & $464\times832$ & 17.05 & 0.641 & 0.286 & 1.82$\times$ \\
HiCache & $464\times832$ & 17.82 & 0.678 & 0.241 & 1.86$\times$ \\
TeaCache & $464\times832$ & 19.85 & 0.742 & 0.136 & 2.08$\times$ \\
EasyCache & $464\times832$ & 20.28 & 0.751 & 0.118 & 2.16$\times$ \\
\rowcolor{mycolor!30}\textbf{WorldCache} & $464\times832$ & \textbf{23.24} & \textbf{0.812} & \textbf{0.079} & \textbf{2.41$\times$} \\
\midrule
DuCa & $720\times1280$ & 19.71 & 0.727 & 0.012 & 1.22$\times$ \\
ToCa & $720\times1280$ & 19.45 & 0.714 & 0.013 & 1.12$\times$ \\
TaylorSeer & $720\times1280$ & 17.60 & 0.651 & 0.018 & 0.86$\times$ \\
HiCache & $720\times1280$ & 18.39 & 0.688 & 0.015 & 0.95$\times$ \\
TeaCache & $720\times1280$ & 20.49 & 0.753 & 0.009 & 2.17$\times$ \\
EasyCache & $720\times1280$ & 20.93 & 0.762 & 0.007 & 2.25$\times$ \\
\rowcolor{mycolor!30}\textbf{WorldCache} & $720\times1280$ & \textbf{23.99} & \textbf{0.824} & \textbf{0.005} & \textbf{2.51$\times$} \\
\bottomrule
\end{tabular}
}
\end{table}

\subsection{Different Denoising Schedules and Longer Rollouts}

We next vary the denoising horizon on HunyuanVoyager.
As shown in Table~\ref{tab:broader_schedule}, WorldCache remains the strongest method under both denser 70-step sampling and more aggressive 30-step sampling, indicating that the cache policy is not specialized to one diffusion schedule.
We also extend LingBot rollouts beyond the 49-frame setting in Table~\ref{tab:broader_lingbot}.
Table~\ref{tab:broader_rollout} shows that WorldCache continues to outperform existing baselines at 81 and 161 frames, with stable or slightly increasing speedup as the context grows.

\begin{table}[h!]
\caption{Broader evaluation on Voyager under different denoising schedules.}
\label{tab:broader_schedule}
\centering
\small
\resizebox{0.92\linewidth}{!}{
\begin{tabular}{l|c|cccc}
\toprule
\textbf{Method} & \textbf{Steps} & PSNR${\mathbf{\red{\uparrow}}}$ & SSIM${\mathbf{\red{\uparrow}}}$ & LPIPS${\mathbf{\red{\downarrow}}}$ & Speed${\mathbf{\red{\uparrow}}}$ \\
\midrule
DuCa & 70 & 18.16 & 0.552 & 0.452 & 1.41$\times$ \\
ToCa & 70 & 17.01 & 0.453 & 0.524 & 1.10$\times$ \\
TaylorSeer & 70 & 19.82 & 0.659 & 0.259 & 0.98$\times$ \\
HiCache & 70 & 20.06 & 0.667 & 0.247 & 1.07$\times$ \\
TeaCache & 70 & 18.05 & 0.617 & 0.331 & 3.82$\times$ \\
EasyCache & 70 & 23.56 & 0.789 & 0.167 & 4.03$\times$ \\
\rowcolor{mycolor!30}\textbf{WorldCache} & 70 & \textbf{25.49} & \textbf{0.828} & \textbf{0.130} & \textbf{4.09$\times$} \\
\midrule
DuCa & 30 & 16.13 & 0.501 & 0.509 & 1.00$\times$ \\
ToCa & 30 & 14.98 & 0.402 & 0.581 & 0.78$\times$ \\
TaylorSeer & 30 & 17.79 & 0.608 & 0.316 & 0.66$\times$ \\
HiCache & 30 & 18.03 & 0.616 & 0.304 & 0.71$\times$ \\
TeaCache & 30 & 15.61 & 0.556 & 0.399 & 2.46$\times$ \\
EasyCache & 30 & 21.12 & 0.728 & 0.235 & 2.61$\times$ \\
\rowcolor{mycolor!30}\textbf{WorldCache} & 30 & \textbf{22.78} & \textbf{0.760} & \textbf{0.206} & \textbf{2.72$\times$} \\
\bottomrule
\end{tabular}
}
\end{table}

\begin{table}[h!]
\caption{Broader evaluation on longer LingBot rollouts beyond the 49-frame main setting.}
\label{tab:broader_rollout}
\centering
\small
\resizebox{0.92\linewidth}{!}{
\begin{tabular}{l|c|cccc}
\toprule
\textbf{Method} & \textbf{Frames} & PSNR${\mathbf{\red{\uparrow}}}$ & SSIM${\mathbf{\red{\uparrow}}}$ & LPIPS${\mathbf{\red{\downarrow}}}$ & Speed${\mathbf{\red{\uparrow}}}$ \\
\midrule
ToCa & 81 & 19.54 & 0.666 & 0.137 & 1.03$\times$ \\
TaylorSeer & 81 & 17.32 & 0.608 & 0.212 & 0.97$\times$ \\
HiCache & 81 & 18.23 & 0.643 & 0.171 & 0.99$\times$ \\
TeaCache & 81 & 20.89 & 0.701 & 0.085 & 2.23$\times$ \\
EasyCache & 81 & 21.81 & 0.705 & 0.082 & 2.29$\times$ \\
\rowcolor{mycolor!30}\textbf{WorldCache} & 81 & \textbf{23.37} & \textbf{0.780} & \textbf{0.001} & \textbf{2.47$\times$} \\
\midrule
ToCa & 161 & 18.81 & 0.709 & 0.157 & 1.01$\times$ \\
TaylorSeer & 161 & 16.23 & 0.636 & 0.217 & 0.89$\times$ \\
HiCache & 161 & 17.26 & 0.676 & 0.182 & 0.93$\times$ \\
TeaCache & 161 & 20.47 & 0.756 & 0.118 & 2.42$\times$ \\
EasyCache & 161 & 21.81 & 0.777 & 0.133 & 2.45$\times$ \\
\rowcolor{mycolor!30}\textbf{WorldCache} & 161 & \textbf{22.16} & \textbf{0.803} & \textbf{0.001} & \textbf{2.53$\times$} \\
\bottomrule
\end{tabular}
}
\end{table}

\subsection{Runtime Overhead and Memory Behavior}

The additional control path of WorldCache is negligible relative to end-to-end inference.
As shown in Table~\ref{tab:broader_overhead}, curvature estimation, percentile grouping, and trigger evaluation cost only 0.064--0.189s across models, about 0.05\% of total latency.
This explains why the observed speedups come from cheaper cached prediction rather than from shifting cost to the control logic.
Table~\ref{tab:broader_memory} further shows that the memory overhead of the model-level cache remains almost constant as context length grows, because WorldCache stores only a fixed-depth output history and token masks instead of per-layer activations.
In practice, the added memory stays at 0.02--0.03 GB from 49 to 161 frames.

\begin{table}[h!]
\caption{Runtime overhead of WorldCache control logic.}
\label{tab:broader_overhead}
\centering
\small
\resizebox{0.68\linewidth}{!}{
\begin{tabular}{l|ccc}
\toprule
\textbf{Model} & \makecell{WorldCache\\Overhead (s)$_{\mathbf{\red{\downarrow}}}$} & \makecell{End-to-End\\Latency (s)$_{\mathbf{\red{\downarrow}}}$} & \makecell{Overhead /\\Latency$_{\mathbf{\red{\downarrow}}}$} \\
\midrule
LingBot-14B & 0.189 & 348.7 & 0.054\% \\
Voyager-13B & 0.137 & 288.6 & 0.047\% \\
Aether-5B & 0.064 & 107.2 & 0.060\% \\
\bottomrule
\end{tabular}
}
\end{table}

\begin{table}[h!]
\caption{Memory behavior of WorldCache as context length increases.}
\label{tab:broader_memory}
\centering
\small
\resizebox{0.82\linewidth}{!}{
\begin{tabular}{c|cccc}
\toprule
\textbf{Context Length} & \makecell{Baseline \\Mem.(GB)$_{\mathbf{\red{\downarrow}}}$} & \makecell{WorldCache \\Mem.(GB)$_{\mathbf{\red{\downarrow}}}$} & \makecell{Extra \\Mem.(GB)$_{\mathbf{\red{\downarrow}}}$} & Speed$_{\mathbf{\red{\uparrow}}}$ \\
\midrule
49 & 49.37 & 49.39 & 0.02 & 2.41$\times$ \\
81 & 55.87 & 55.89 & 0.02 & 2.47$\times$ \\
161 & 71.32 & 71.35 & 0.03 & 2.53$\times$ \\
\bottomrule
\end{tabular}
}
\end{table}

Degradation is most likely in highly dynamic regimes with rapid camera motion, complex textures, or abrupt geometric changes.
In these cases, the error is dominated by a small chaotic-token subset, so aggressive caching may either trigger more frequent \texttt{FULL} recomputation and shrink the speedup, or, if the threshold is too loose, introduce visible local drift.
CAS alleviates this trade-off by monitoring the hard subset directly, but it does not eliminate it entirely.

\section{More Analysis of Curvature-guided Heterogeneous Token Prediction}
\label{sec:more_analysis_token}

In this section, we provide further empirical evidence to substantiate the design choices of our \textit{Curvature-guided Heterogeneous Token Prediction} (CHTP). We first visualize the pervasiveness of token heterogeneity across different prompts and modalities, and then present a quantitative ablation study to validate the effectiveness of our grouping strategy.

\subsection{Visualization of Token Heterogeneity}
\label{subsec:vis_heterogeneity}

To demonstrate that token heterogeneity is a widespread phenomenon in world models rather than an isolated case, we extend our visualization to multiple diverse prompts using HunyuanVoyager~\citep{huang2025voyager}. As shown in Figure~\ref{fig:more_obeservation1}, we map the curvature $\kappa$ of both RGB and Depth tokens across various denoising steps ($t=2$ to $t=49$).

Two key patterns emerge from these visualizations:
\begin{itemize}
    \item \textbf{Modal Heterogeneity:} There is a distinct disconnect between the curvature landscapes of RGB and Depth modalities. For instance, in Prompt 2 (Step 37), the Depth tokens exhibit high curvature (indicated by purple regions) corresponding to large geometric structures, while the corresponding RGB tokens show a different, texture-dependent distribution. This confirms that a single caching decision cannot satisfy both modalities simultaneously.
    \item \textbf{Spatial Heterogeneity:} Within any single heatmap, the curvature is non-uniform. High-curvature ``chaotic'' regions (often object boundaries or rapid motion areas) coexist with extensive low-curvature ``stable'' regions (backgrounds). This spatial variance persists across different prompts, reinforcing the need for a spatially adaptive prediction mechanism.
\end{itemize}

\begin{figure}[h!]
    \centering
    \subfloat[][Visualization of prompt1.]{
        \includegraphics[width=0.9\linewidth]{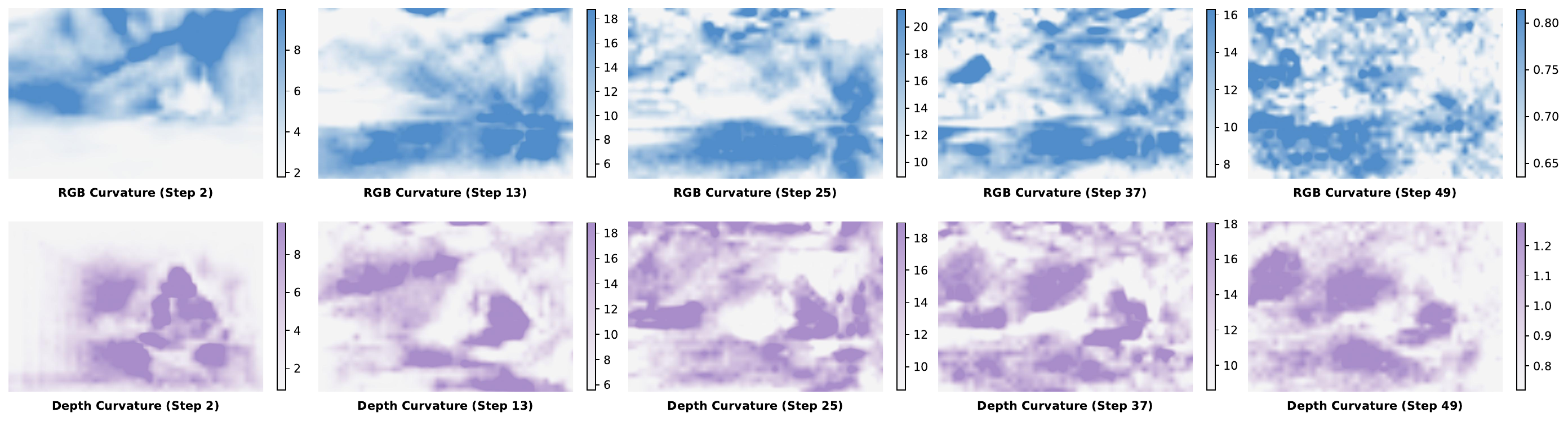}
        \label{fig:vis_prompt1}
    }\\
    \subfloat[][Visualization of prompt2.]{
        \includegraphics[width=0.9\linewidth]{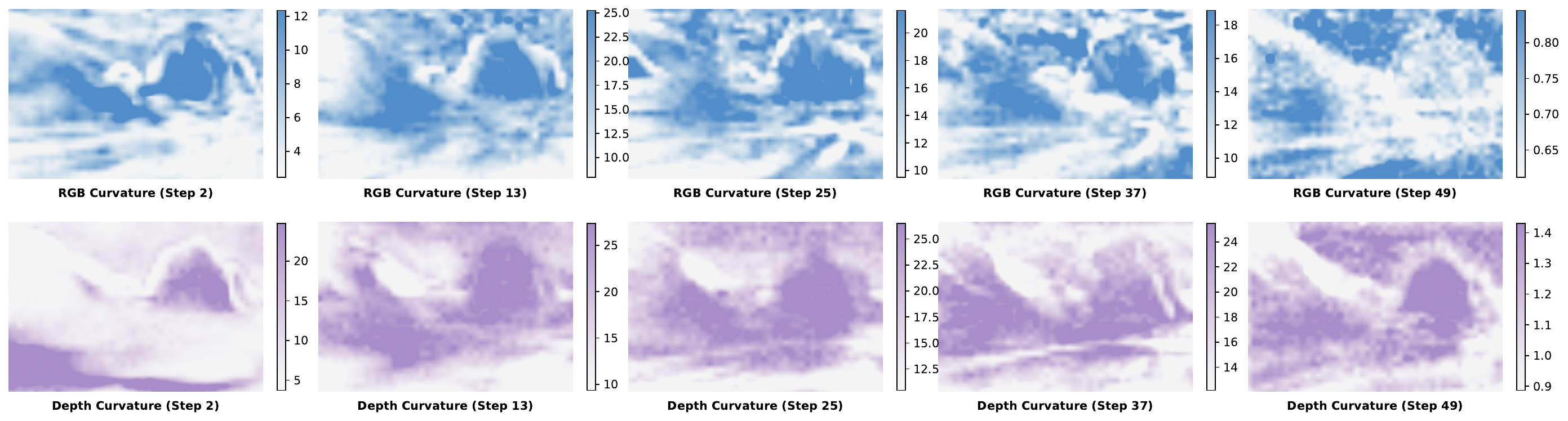}
         \label{fig:vis_prompt2}
    }\\
    \subfloat[][Visualization of prompt3.]{
        \includegraphics[width=0.9\linewidth]{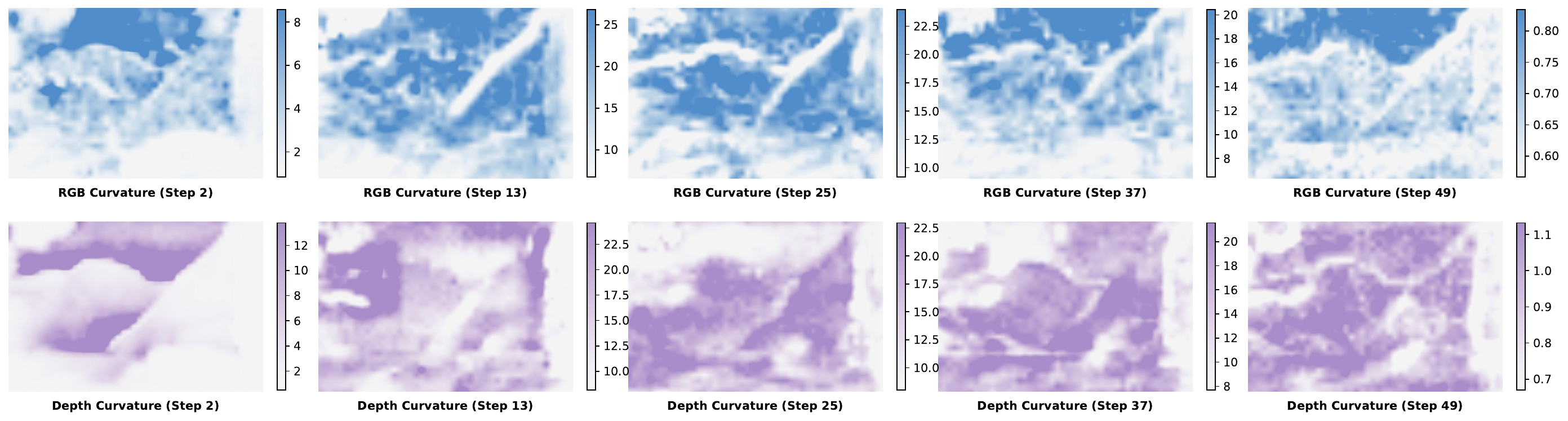}
         \label{fig:vis_prompt3}
    }\\
    \subfloat[][Visualization of prompt4.]{
        \includegraphics[width=0.9\linewidth]{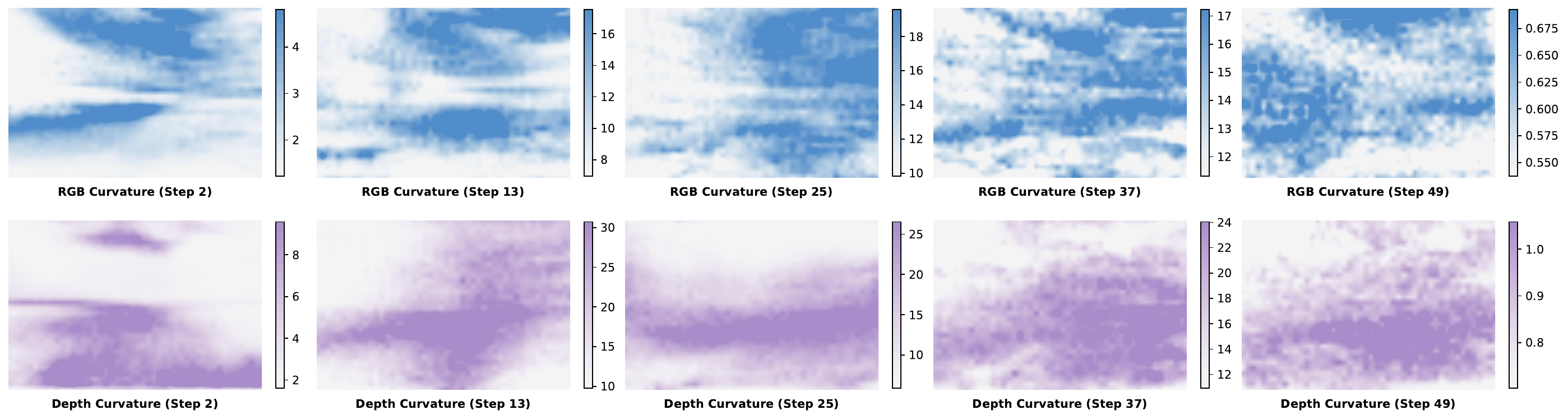}
         \label{fig:vis_prompt4}
    }
    \caption{\textbf{More visualization of token heterogeneity.} We visualize both RGB tokens and Depth tokens of different prompts across 50 denoising steps in HunyuanVoyager~\citep{huang2025voyager}. The distinct patterns between RGB and Depth, as well as the spatial variance within each map, underscore the necessity of heterogeneous processing.}
\label{fig:more_obeservation1}
\end{figure}

\subsection{Effectiveness of Curvature-guided Grouping}
\label{subsec:ablation_prediction}

We further validate our method through a quantitative ablation study, comparing \textsc{WorldCache} against uniform prediction baselines and random grouping strategies. Table~\ref{tab:ablation_predict_main} in the main text reports the reconstruction quality (PSNR, SSIM, LPIPS) and generation latency.

\paragraph{Failure of Uniform Strategies.}
Applying a single prediction rule to all tokens proves suboptimal regardless of the complexity of the operator:
\begin{itemize}
    \item \textbf{Uniform Reuse:} While computationally efficient ($O(1)$), simply copying features leads to mediocre fidelity (PSNR 22.74). It fails to capture the evolution of features, particularly for tokens that are not in a steady state .
    \item \textbf{Uniform Linear:} Naively applying linear extrapolation to all tokens results in the worst performance (PSNR 18.01). This catastrophic drop is attributed to the presence of high-curvature ``chaotic'' tokens. In high-curvature regions, linear projection ignores the non-linear ``bending'' of the feature trajectory, causing severe overshooting and feature divergence that degrades the entire frame.
    \item \textbf{Uniform Damped:} While our adaptive damped prediction is designed to stabilize chaotic regions by interpolating between current and historical velocities, applying it uniformly to all tokens is also suboptimal. Although it outperforms the linear baseline by preventing divergence (PSNR 23.76 vs. 18.01), it yields a lower SSIM (0.665) compared to simple reuse (0.714). This suggests that for ``stable'' or ``linear'' tokens, the additional damping logic introduces unnecessary historical bias and smoothing, which can blur static details or lag behind predictable motions.
\end{itemize}

\paragraph{Importance of Curvature-based Grouping.}
To isolate the contribution of our curvature metric, we compare our method against a \textbf{Random Grouping} baseline, which assigns tokens to Stable/Linear/Chaotic groups randomly (preserving the same ratios).
\begin{itemize}
    \item \textbf{Random vs. Curvature:} Random grouping performs significantly worse than our method (e.g., SSIM 0.710 vs. 0.791). This confirms that the performance gain does not come merely from mixing different operators, but from correctly identifying \textit{which} tokens require which operator.
    \item \textbf{CHTP Superiority:} Our proposed CHTP achieves the highest quality metrics (PSNR 25.76, LPIPS 0.227) with negligible latency overhead compared to the cheapest baseline. By accurately assigning reuse to stable tokens, linear extrapolation to smooth motion, and damped prediction to chaotic regions, CHTP effectively resolves the trade-off between stability and responsiveness.
\end{itemize}

\begin{figure}[h!]
    \centering
    \includegraphics[width=1.0\linewidth]{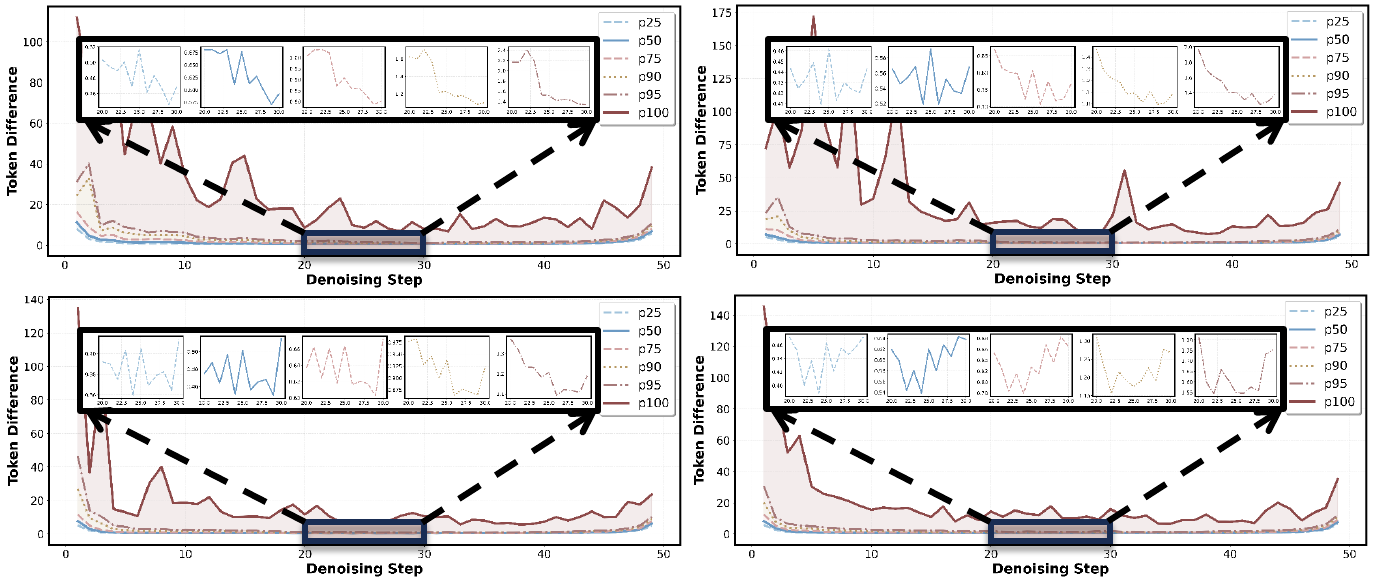}
    \caption{\textbf{Visualization of non-uniform temporal error dynamics.} We plot the prediction error accumulation across different token percentiles ($p_{25}$ to $p_{100}$). The results show that global variance is dominated by the top percentile of ``chaotic'' tokens (red line), while the majority of tokens ($p_{50}$ and below) remain stable. This supports our strategy of monitoring the \textit{Chaotic} group rather than the global mean.}
\label{fig:more_obeservation2}
\end{figure}

\begin{figure}[h!]
    \centering
    \includegraphics[width=1.0\linewidth]{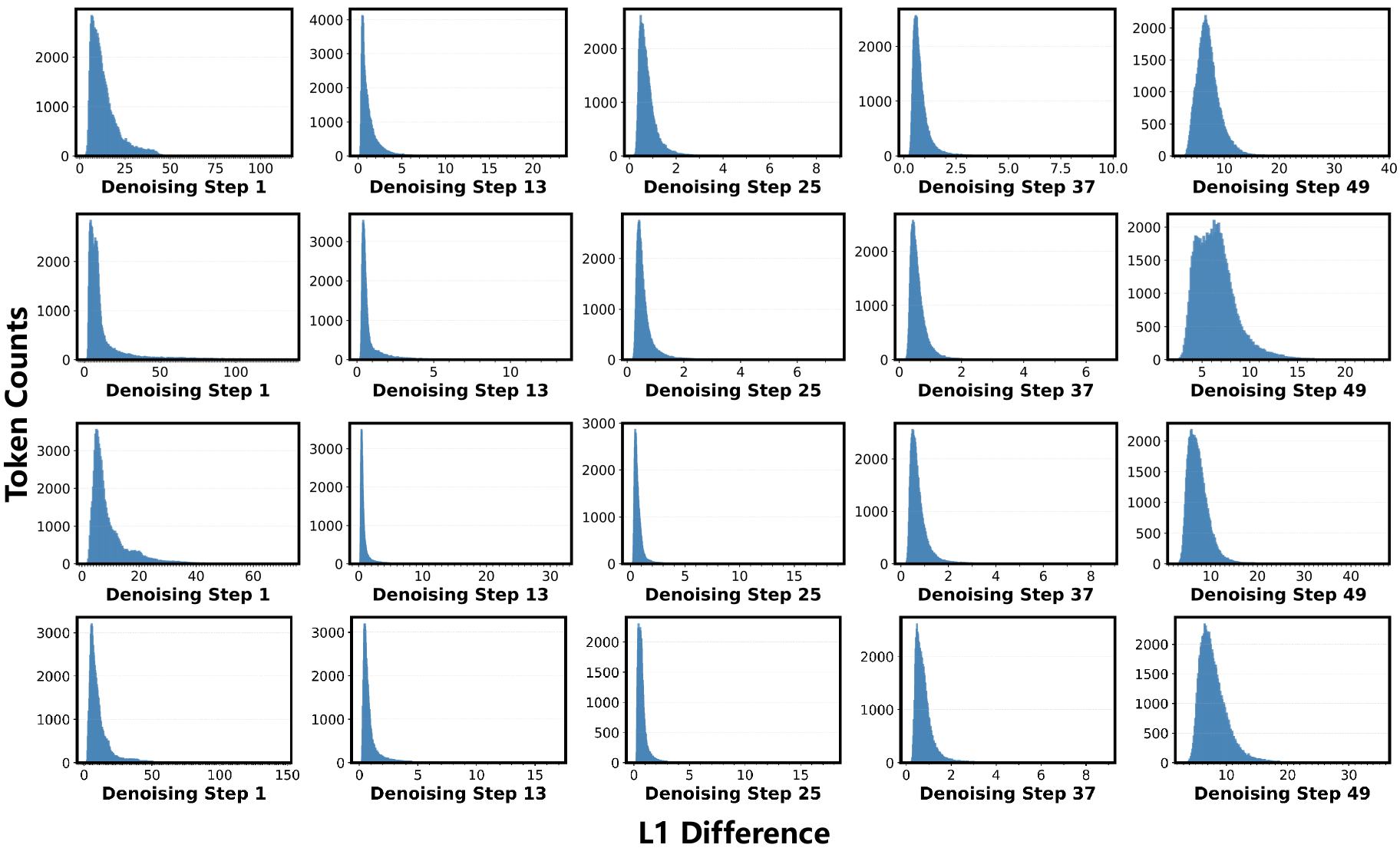}
    \caption{\textbf{Visualization of temporal scale variance.} Histograms show the distribution of feature values at different denoising steps. The numerical range fluctuates by orders of magnitude (e.g., thousands at Step 1 vs. single digits at Step 37), proving that fixed thresholds on raw values are unreliable and validating the necessity of our \textit{dimensionless} drift indicator.}
\label{fig:more_obeservation2_scale}
\end{figure}

\section{More Analysis of Chaotic-prioritized Adaptive Skipping}
\label{sec:more_adaptive_skip}

In this section, we provide a deeper investigation into the design rationale of our \textit{Chaotic-prioritized Adaptive Skipping} (CAS) strategy. We present visual evidence of the non-uniform error distribution that necessitates a prioritized approach, illustrate the scale variance that mandates a dimensionless metric, and quantitatively benchmark CAS against alternative skipping heuristics.

\subsection{Dominance of Chaotic Tokens in Temporal Dynamics}
\label{subsec:temporal_dynamics}

A core premise of \textbf{WorldCache} is that global error metrics are diluted by the vast majority of stable tokens, masking the critical drift of ``hard'' tokens. To verify this, we visualize the temporal evolution of prediction errors (L1 difference between predicted and ground truth features) across different percentiles ($p_{25}$ to $p_{100}$) in Figure~\ref{fig:more_obeservation2}.

As observed in the plots, the error trajectories for lower percentiles (e.g., $p_{25}, p_{50}$, blue/grey lines) remain consistently flat and low throughout the denoising process. In contrast, the top percentile ($p_{100}$, red line), representing the most chaotic tokens, exhibits sharp, erratic spikes. Crucially, the global variance is almost entirely driven by this minority. A standard ``average error'' threshold would smooth out these spikes, failing to trigger a re-computation when the chaotic tokens diverge. By explicitly tracking the \textit{Chaotic} group, our method captures these critical failure points that standard metrics miss.

\subsection{Necessity of Dimensionless Indicators}
\label{subsec:dimensionless}

Another challenge in dynamic skipping is setting a robust threshold. As shown in Figure~\ref{fig:more_obeservation2_scale}, the distribution of feature magnitudes and error scales varies drastically across different denoising timesteps.
For example, early steps (e.g., Step 1) may exhibit error scales in the hundreds (0-100), while later steps (e.g., Step 37) cluster in a much narrower, smaller range (0-10).

This scale variance renders `raw' metrics (like absolute L1 distance) ineffective: a threshold $\tau$ suitable for Step 37 would trigger at every single iteration in Step 1, while a threshold suitable for Step 1 would never trigger in Step 37. This empirical evidence justifies our design of the \textbf{Dimensionless Drift Indicator} ($E = \kappa \cdot \|\Delta y\|$). By normalizing the displacement with curvature, we obtain a scale-invariant metric that robustly indicates relative instability across all phases of the denoising process.

\subsection{Ablation on Skipping Strategies}
\label{subsec:ablation_skip}

Finally, we quantitatively compare our CAS strategy against other common skipping policies in Table~\ref{tab:ablation_skip_main} of the main text.
\begin{itemize}
    \item \textbf{Fixed Interval:} A rigid ``skip-$k$-compute-1'' schedule yields the baseline performance (PSNR 26.18). It fails to adapt to the variable difficulty of steps.
    \item \textbf{Difference/Norm Guided:} Using raw Feature Difference $\bigl\|\mathbf{y}_{t}-\mathbf{y}_{t+1}\bigr\|_2$ or Norm $\frac{\bigl\|\mathbf{y}_{t}-\mathbf{y}_{t+1}\bigr\|_2}{\bigl\|\mathbf{y}_{t+1}\bigr\|_2}$ as triggers improves over fixed intervals (PSNR 26.79) but is suboptimal due to the scale variance issue discussed above, where the thresholds are hard to tune globally.
    \item \textbf{Curvature Guided:} Triggering based solely on curvature (difficulty) without considering actual displacement (movement) performs poorly (PSNR 25.87). High curvature implies a \textit{potential} for error, but if the token displacement is small, re-computation is wasteful.
    \item \textbf{CAS (Ours):} Our method achieves the best trade-off (PSNR 27.10, SSIM 0.881). By combining curvature (potential difficulty) with displacement (actual drift) into a dimensionless metric, and prioritizing the chaotic group, CAS ensures resources are allocated exactly when and where the model is most likely to fail.
\end{itemize}

\section{Reproducibility Statement}
To enhance reproducibility, we have attached our necessary code and the generated raw video files in the supplementary material.

\section{More Visual Comparison}
\label{sec:more_visual}

Here, we provide more visual comparisons to demonstrate the effectiveness of our proposed \textbf{WorldCache}. The results are shown in Fig.~\ref{fig:vis_1}, Fig.~\ref{fig:vis_2}, Fig.~\ref{fig:vis_3}, and Fig.~\ref{fig:vis_4}.

\begin{figure}[h!]
    \centering
    \includegraphics[width=1.0\linewidth]{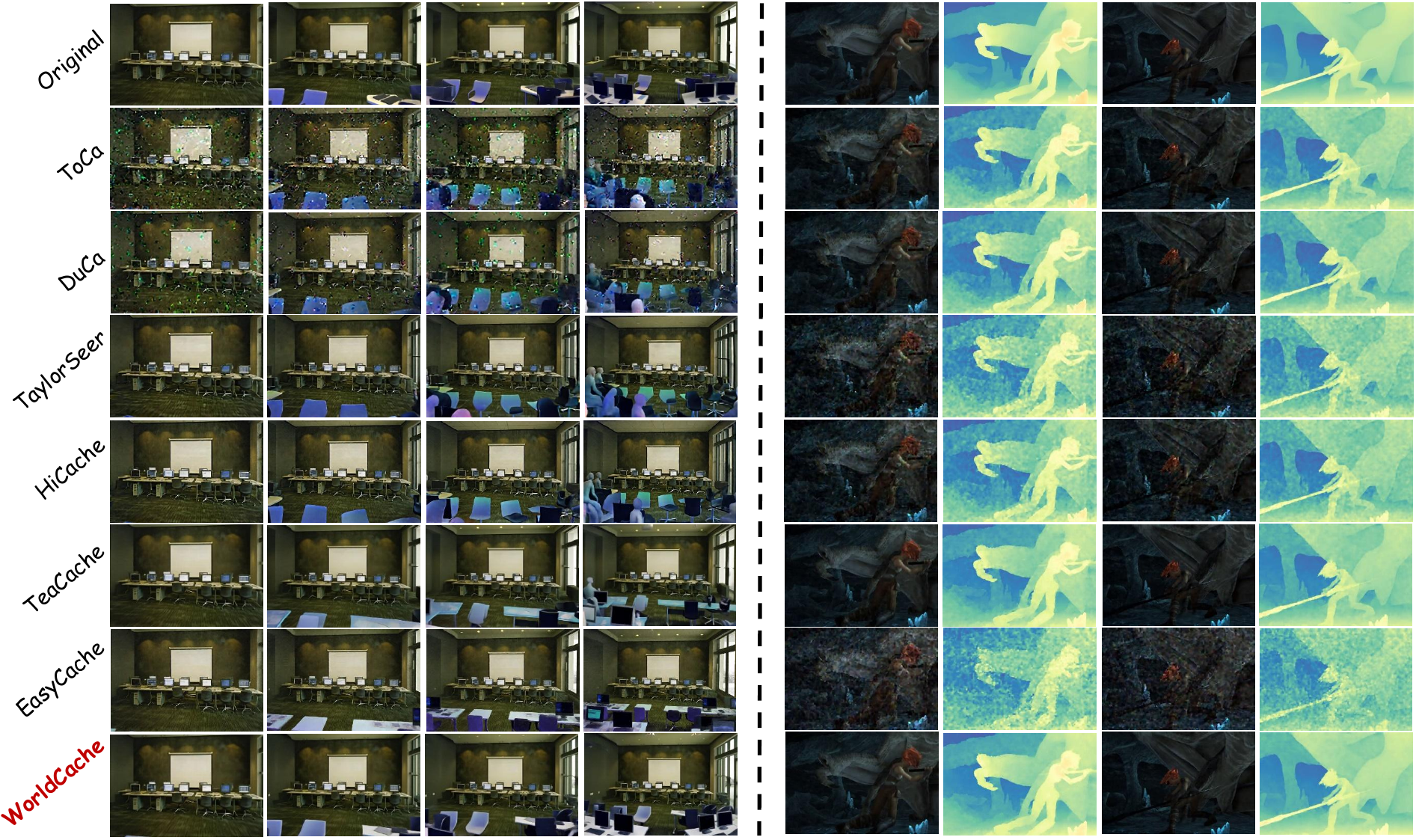}
    \caption{More visual comparison between \textbf{WorldCache} and existing methods.}
\label{fig:vis_1}
\end{figure}

\begin{figure}[h!]
    \centering
    \includegraphics[width=1.0\linewidth]{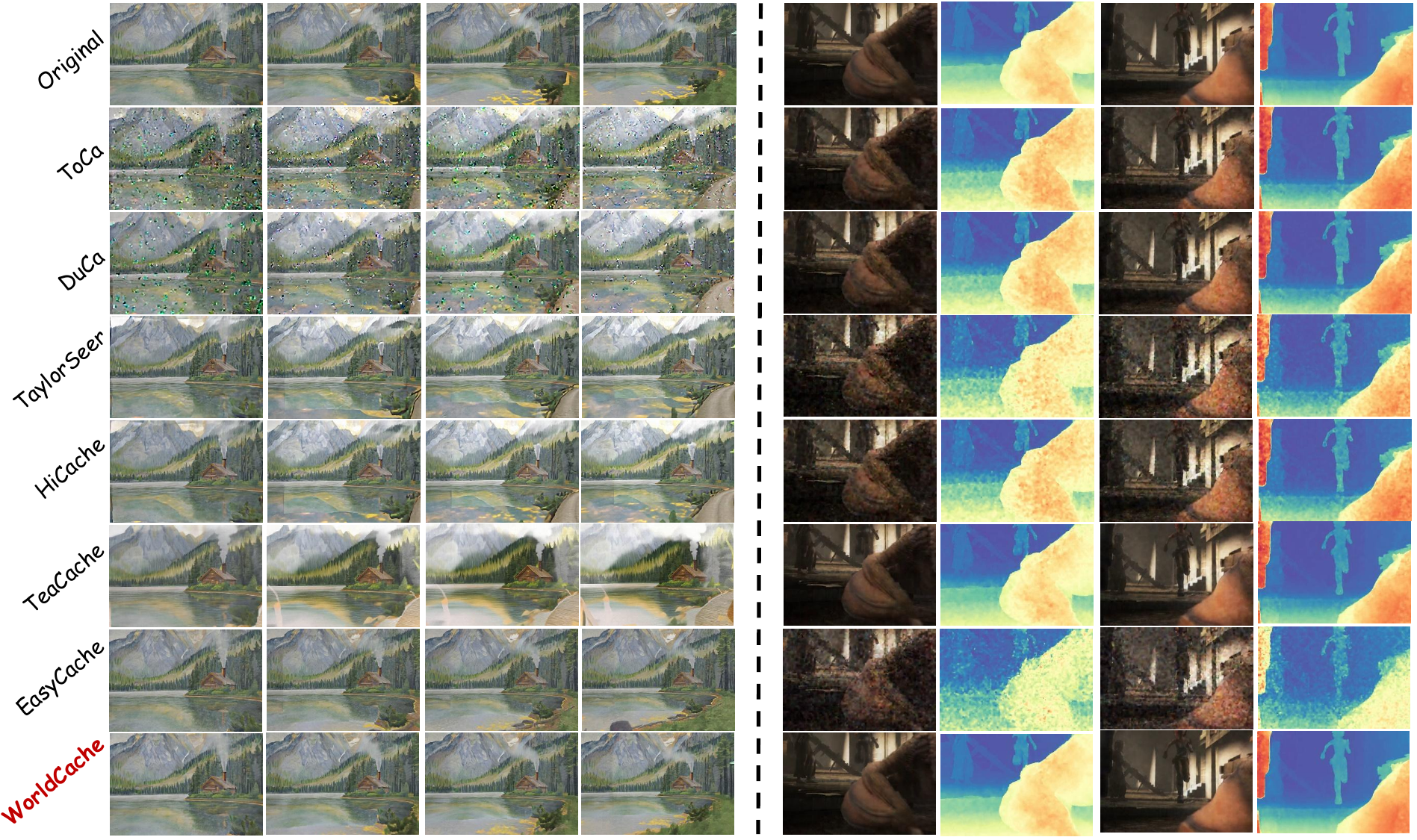}
    \caption{More visual comparison between \textbf{WorldCache} and existing methods.}
\label{fig:vis_2}
\end{figure}

\begin{figure}[h!]
    \centering
    \includegraphics[width=1.0\linewidth]{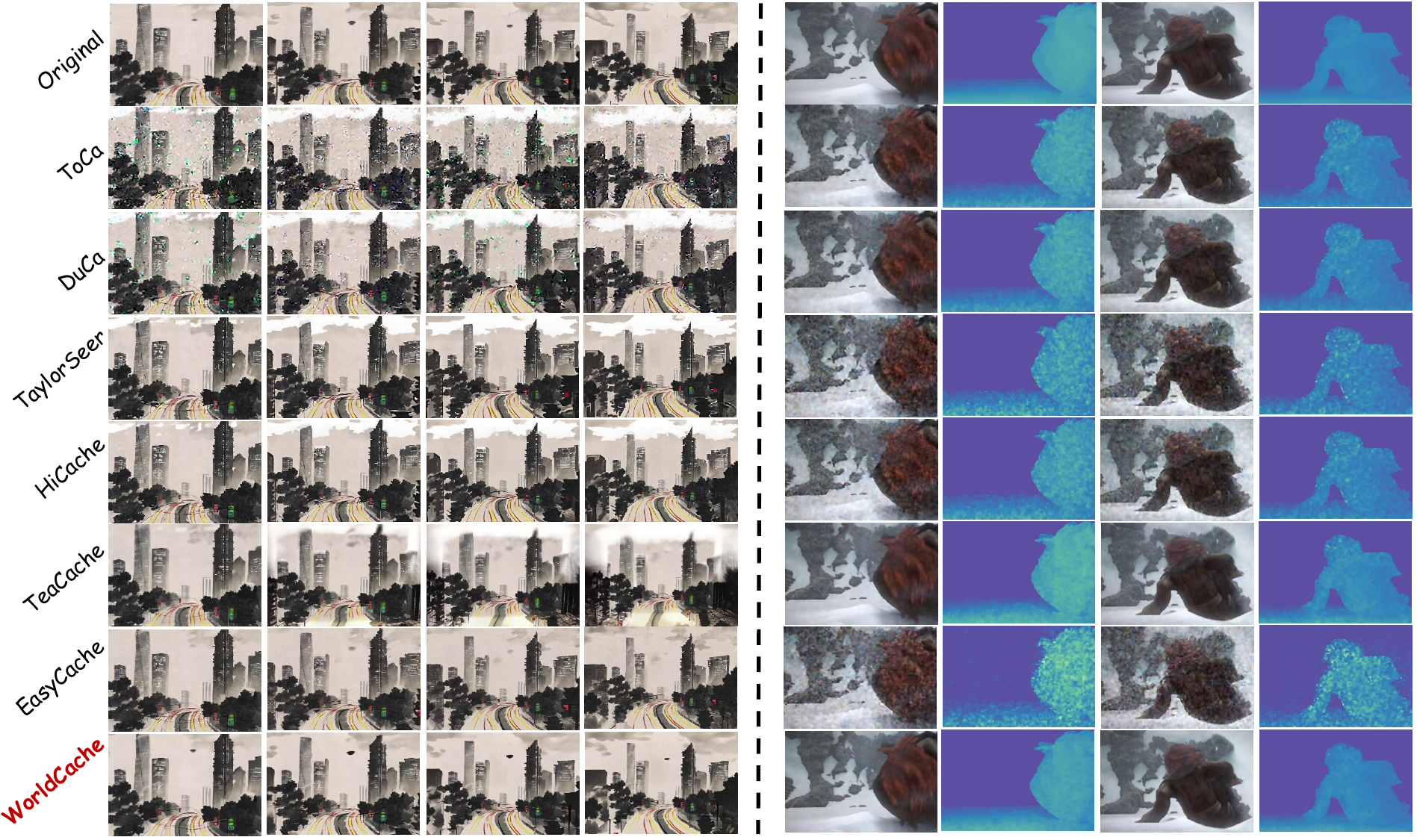}
    \caption{More visual comparison between \textbf{WorldCache} and existing methods.}
\label{fig:vis_3}
\end{figure}

\begin{figure}[h!]
    \centering
    \includegraphics[width=1.0\linewidth]{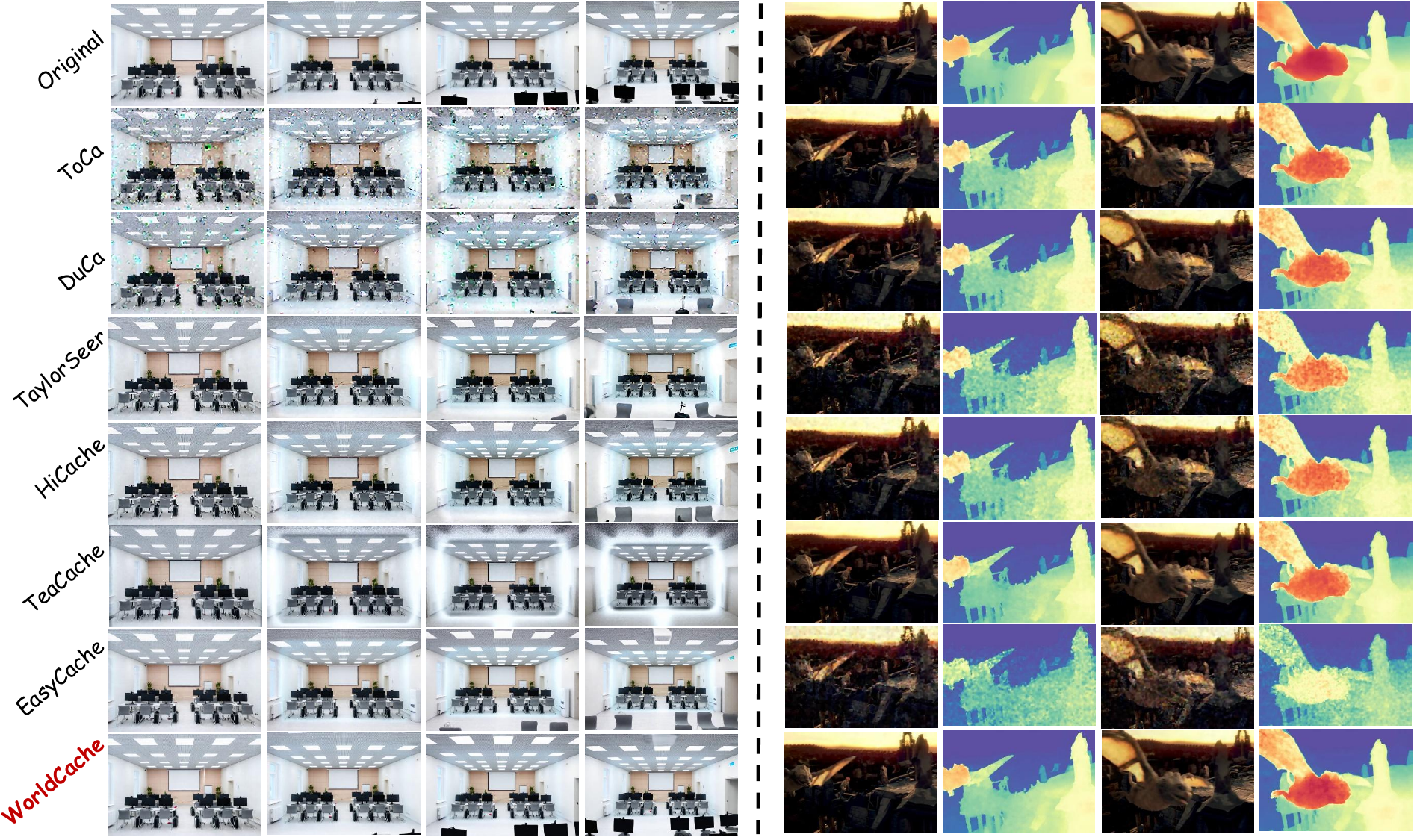}
    \caption{More visual comparison between \textbf{WorldCache} and existing methods.}
\label{fig:vis_4}
\end{figure}


\end{document}